\def\eqref#1{equation~\ref{#1}}
\def\1{\bm{1}}
\DeclareMathAlphabet{\mathsfit}{\encodingdefault}{\sfdefault}{m}{sl}
\SetMathAlphabet{\mathsfit}{bold}{\encodingdefault}{\sfdefault}{bx}{n}
\DeclareMathOperator*{\argmin}{arg\,min}
\title{On sensitivity of meta-learning to support data}
\author{
    Mayank Agarwal $^{1}$ \\
    \texttt{mayank.agarwal@ibm.com} \\
    \And
    Mikhail Yurochkin $^{1,2}$   \\
    \texttt{mikhail.yurochkin@ibm.com}  \\
    \And
    Yuekai Sun  $^{3}$        \\
    \texttt{yuekai@umich.edu}   \\
    \And
    \normalfont{IBM Research,$^1$ MIT-IBM Watson AI Lab,$^2$ University of Michigan$^3$.}
}
\begin{document}

\maketitle

\begin{abstract}
  Meta-learning algorithms are widely used for few-shot learning. For example, image recognition systems that 
  readily adapt to unseen classes after seeing only a few labeled examples. Despite their success, we show that 
  modern meta-learning algorithms are extremely sensitive to the data used for adaptation, i.e. support data. In particular, we 
  demonstrate the existence of (unaltered, in-distribution, natural) images that, when used for adaptation, yield 
  accuracy as low as 4\% or as high as 95\% on standard few-shot image classification benchmarks. We explain our 
  empirical findings in terms of class margins, which in turn suggests that robust and safe meta-learning requires 
  larger margins than supervised learning.
\end{abstract}


\section{Introduction}
\label{sec:intro}
Meta-learning, or learning to learn \citep{novak1984learning}, is the problem of training models that can adapt to new tasks quickly, using only a handful of examples. The problem is inspired by humans' ability to learn new skills or concepts at a rapid pace (\eg\ recognizing previously unknown objects after seeing only a single example \citep{lake2011one}). Meta-learning has found applications in many domains, including safety-critical medical image analysis \citep{maicas2018training}, autonomous driving \citep{sallab2017meta}, visual navigation \citep{wortsman2019learning} and legged robots control \citep{song2020rapidly}. In this paper, we investigate the vulnerabilities of modern meta-learning algorithms in the context of few-shot image classification problem \citep{miller2000learning}, where a meta-learner needs to solve a classification task on classes unseen during training using a small number (typically 1 or 5) of labeled samples from each of these classes to adapt. Specifically, we demonstrate that the performance of modern meta-learning algorithms on few-shot image recognition benchmarks varies drastically depending on the examples provided for adaptation, typically called \emph{support data}, raising concerns about its safety in deployment.

Despite the many empirical successes of artificial intelligence, its vulnerabilities are important to explore and mitigate in our pursuit of safe AI that is suitable for critical applications. Sensitivity to small (possibly adversarial) perturbations to the inputs \citep{goodfellow2014explaining}, backdoor attacks allowing malicious model manipulations \citep{chen2017targeted}, algorithmic biases \citep{angwin2016Machine} and poor generalization outside of the training domain \citep{koh2020wilds} are some of the prominent examples of AI safety failures. Some of these issues have also been studied in the context of meta-learning, e.g. adversarial robustness \citep{yin2018adversarial,goldblum2019adversarially,xu2020yet} and fairness \citep{slack2020fairness}. Most of the aforementioned AI-safety dangers are associated with adversarial manipulations of the train or test data, or significant distribution shifts at test time. In meta-learning, 
prior works demonstrated that an adversary can create visually imperceptible changes of the test inputs \citep{goldblum2019adversarially} or the support data \citep{xu2020yet, oldewage2020attacking} causing meta-learning algorithms to fail on various few-shot image recognition benchmarks. In this work we demonstrate \emph{adversary-free} and \emph{in-distribution} failures specific to meta-learning. \citet{sohn2020fixmatch} studied a similar problem in the context of semi-supervised learning.

\begin{figure*}
    \centering
    \includegraphics[width=\textwidth]{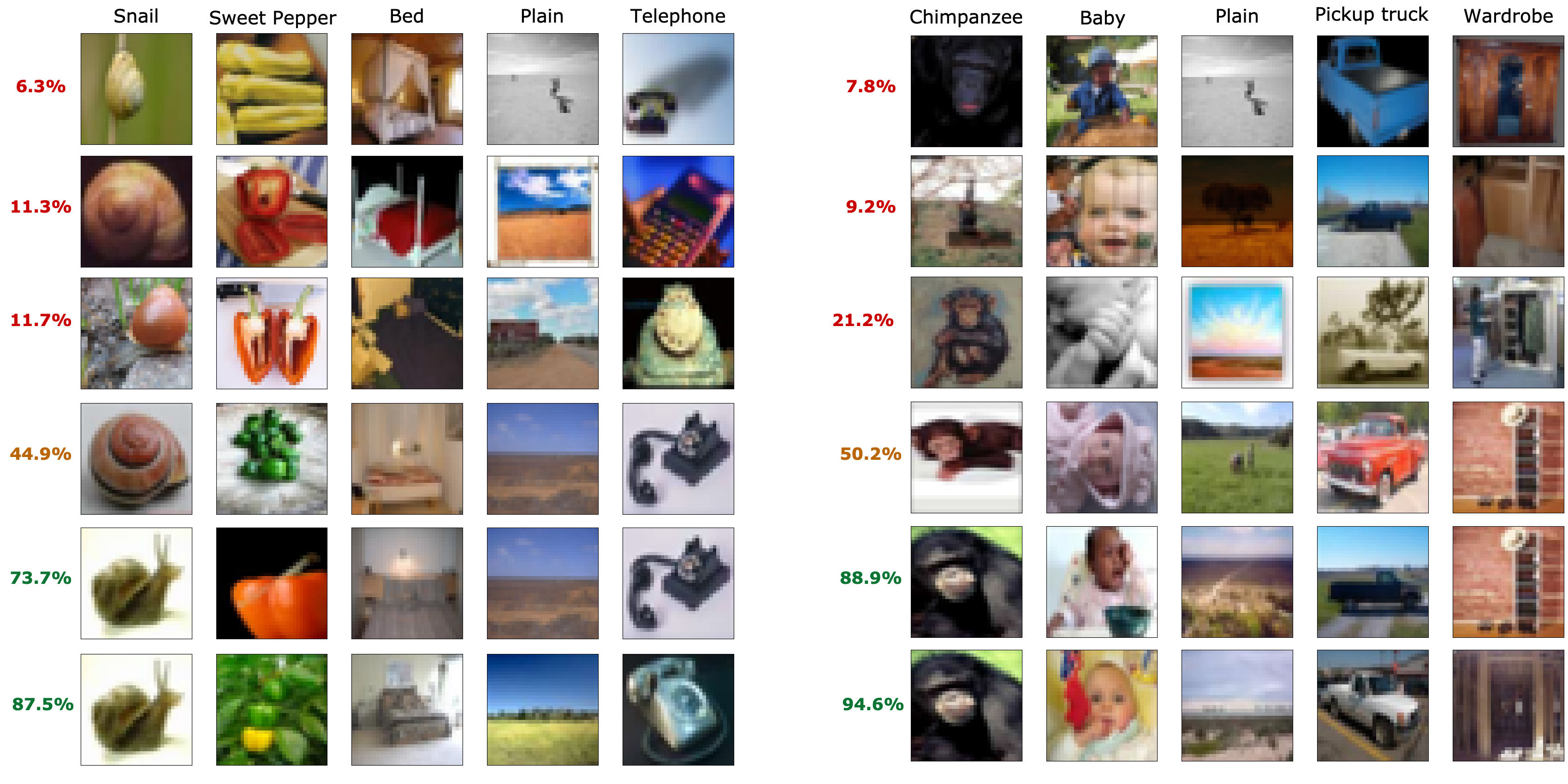}
    \caption{Examples of CIFAR-FS support images (unaltered, i.e. not modified adversarially or in any other way) from two test tasks that yield vastly different 1-shot post-adaptation performances of a popular meta-learning algorithm (MetaOptNet-SVM). Accuracy ranges from 6.3\% to 94.6\% suggesting extreme sensitivity of the meta-learner to the support data. Images mostly appear representative of the corresponding classes and would be hard to recognize as potentially problematic without significant expertise of the training dataset from the human labeling the support data.}
    \label{fig:exemplars}
\end{figure*}



A distinct feature of meta-learning is the adaptation stage where the model is updated using a scarce amount of labeled examples from a new task. In practice, these examples need to be selected and presented to a human for labeling. Then a meta-learner adapts and can be used to classify the remaining images without a human. Currently, meta-learning algorithms choose support examples for labeling at random. \citet{dhillon2019baseline} noted that the standard deviation of accuracies computed with random support examples may be large. We demonstrate that if the support data is not carefully curated, the performance of the meta-learner can be unreliable. In Figure \ref{fig:exemplars} we present examples of unaltered support images (i.e. they are not modified adversarially or in any other way) representative of the corresponding task (in-distribution) that lead to vastly different performance of the meta-learner after adaptation. Support examples causing poor performance are not prevalent, however, they are also not data artifacts as there are multiple of them. The existence of such examples might be concerning when deploying meta-learning in practice, even when adversarial intervention is out of scope.

Our main contributions are as follows:
\begin{enumerate}[topsep=0pt,itemsep=1ex,partopsep=1ex,parsep=1ex]
\item We present a simple algorithm for finding the best and the worst support examples and empirically demonstrate the sensitivity of popular meta-learning algorithms to support data.
\item We demonstrate that a popular strategy for achieving robustness \citep{madry2017towards} adapted to our setting \emph{fails} to solve the support data sensitivity problem.
\item We explain the existence of the worst case examples from the margin perspective suggesting that robust meta-learning requires class margins more stringent than classical supervised learning.
\end{enumerate}

\section{Meta-learning approaches}
\label{sec:maml}
Meta-learning approaches are typically categorized into model-based \citep{ravi2016optimization,mishra2017simple,qiao2018few}, metric-based \citep{koch2015siamese,vinyals2016matching,snell2017prototypical} and optimization-based methods \citep{finn2017model,nichol2018first,grefenstette2019generalized}. We refer to \citet{hospedales2020meta} for a recent survey of the area. In this paper we mostly focus on the optimization-based method popularized by the Model Agnostic Meta Learning (MAML) \citep{finn2017model} bi-level formulation (metric-based prototypical networks \citep{snell2017prototypical} are also considered in the experiments). Let $\{\mathcal{T}_n\}_{i=1}^n$ be a collection of $n$ tasks $\mathcal{T}_i = \{\mathcal{A}_i, \mathcal{D}_i\}$ each consisting of a support $\mathcal{A}_i$ and a query (or validation) $\mathcal{D}_i$ datasets. MAML bi-level optimization problem is as follows \citep{finn2017model}:
\begin{equation}
\label{eq:maml}
\begin{split}
\min_\theta & \frac{1}{n}\sum_{i=1}^n \ell(\theta_i, \theta; \mathcal{D}_i),\\
\text{such that } & \,\theta_i = \argmin_{\theta_i} \ell(\theta_i,\theta; \cA_i),\,i=1,\dots,n.
\end{split}
\end{equation}
The first line of \eqref{eq:maml} is typically minimized using gradient-based optimization of $\theta$ and is called \emph{meta-update}. The second line is the \emph{adaptation} step and its implementation differs depending on whether the fine-tuned $\theta_i$ are treated as parameters of the end-to-end neural network or as the ``head'' parameters, i.e. the last linear classification layer. We refer to \citet{goldblum2020unraveling} for an empirical study of the differences between the two adaptation perspectives.

MAML \citep{finn2017model} and Reptile \citep{nichol2018first} are examples of algorithms that fine-tune all network parameters during the adaptation. Instead of solving the argmin of $\theta_i$ exactly they approximate it with a small number of gradient steps. For one gradient step approximation \eqref{eq:maml} can be written as
\begin{equation}
\label{eq:maml-step}
\min_\theta \frac{1}{n}\sum_{i=1}^n \ell(\theta - \alpha\frac{\partial \ell(\theta;\cA_i)}{\partial \theta};\cD_i)
\end{equation}
for some step size $\alpha$. MAML directly optimizes \eqref{eq:maml-step} differentiating through the inner gradient step, which requires expensive Hessian computations. Reptile introduces an approximation by-passing the Hessian computations and often performs better.

Another family of meta-learning algorithms considers $\theta$ as the neural network feature extractor parameters shared across tasks and adapts only the linear classifier parametrized with $\theta_i$, that takes the features extracted with $\theta$ as inputs. The advantage of this perspective is that the adaptation minimization problem is convex and, for many linear classifiers, can be solved fast and exactly. Let $a(\theta,\cA)$ denote a procedure that takes data in $\cA$, passes it through a feature extractor parametrized with $\theta$, and returns $\theta_i$, i.e. optimal parameters of a linear classifier using the obtained features to predict the corresponding labels. Then \eqref{eq:maml} can be written as
\begin{equation}
\label{eq:maml-head}
\min\limits_\theta \frac{1}{n}\sum_{i=1}^n \ell(\theta,a(\theta,\cA_i);\cD_i).
\end{equation}
This approach requires $a(\theta,\cA)$ to be differentiable. R2D2 \citep{bertinetto2019meta} casts classification as a multi-target ridge-regression problem and utilizes the corresponding closed-form solution as $a(\theta,\cA)$. \citet{lee2019meta} propose MetaOptNet, where $a(\theta,\cA)$ is a differentiable quadratic programming solver \citep{amos2017optnet}, and implement it with linear support vector machines (SVM) and ridge-regression. Prototypical networks \citep{snell2017prototypical} is a metric-based approach that can also be viewed from the perspective of \eqref{eq:maml-head}. Here $a(\theta,\cA)$ outputs class centroids in the neural-feature space and the predictions are based on the closest class centroids. Last-layer adaptation approaches, especially MetaOptNet, outperform full-network approaches on most benchmarks \citep{lee2019meta,goldblum2020unraveling}.

Both adaptation perspectives have a weakness underlying our study: their test performance is based on an optimization problem with large number of parameters and as little as 5 data points (1-shot 5-way setting). To understand the problem, consider the last-layer adaptation approach: even when the feature extractor produces linearly separable representations, the dimension is large (for example, \citet{lee2019meta} utilize a ResNet-12 architecture with 2560-dimensional last layer features that we adopt in our experiments) making the corresponding linear classifier extremely sensitive to the support data. In Section \ref{sec:adaptation} we demonstrate the problem empirically and provide theoretical insights in Section \ref{sec:margins}.

\subsection{Meta-learning benchmarks}
\label{sec:data}
Before presenting our findings, we discuss the meta-learning benchmarks we consider. Meta-learning algorithms are often compared in a few-shot image recognition setting. Each task typically has five unique classes, i.e. 5-way, and one or five examples per class for adaptation, i.e. 1-shot or 5-shot. Classes that appear at test time are not seen during training. Few-shot learning datasets are typically the derivatives of the existing supervised learning benchmarks, e.g. CIFAR-100 \citep{krizhevsky2009learning} and ImageNet \citep{deng2009imagenet}. Few-shot problem is setup by disjoint partitioning of the available classes into train, validation and test. Tasks are obtained by sampling 5 distinct classes from the corresponding pool and assigning a subset of examples for adaptation and a different subset for meta-update during training or accuracy evaluation for reporting the performance.

CIFAR-FS \citep{bertinetto2019meta} is a dataset of 60000 32$\times$32 RGB images from CIFAR-100 partitioned into 64, 16 and 20 classes for training, validation and testing, respectively. FC-100 \citep{oreshkin2018tadam} is also a derivative of CIFAR-100 with a different partition aimed to reduce semantic overlap between 60 classes assigned for training, 20 for validation, and 20 for testing. MiniImageNet \citep{vinyals2016matching} is a subsampled, downsized version of ImageNet. It consists of 60000 84$\times$84 RGB images from 100 classes split into 64 for training, 16 for validation, and 20 for testing.


\section{Finding adaptation vulnerabilities}
\label{sec:adaptation}
In this section we study the performance range of meta-learners trained with a variety of algorithms: MAML \citep{finn2017model}, Meta-Curvature (MC) \citep{NEURIPS2019_57c0531e}, Prototypical networks \citep{snell2017prototypical}, R2D2 \citep{bertinetto2019meta}, and MetaOptNet with Ridge and SVM heads \citep{lee2019meta}. At test time a meta-learner receives support data and its performance is recorded after it adapts. Typical evaluation protocols sample support data randomly and report average performance across tasks. While this is a useful measure for comparing algorithms, safety-critical applications demand understanding of potential performance ranges. We demonstrate that there are support examples the lead to vastly different test performances. To identify the worst and the best case support examples for a given task we perform an iterative greedy search.

Let $\cX = \{x^k_m\}^{k\in[K]}_{m\in[M]}$ be a set of potential support examples for a given task. $K$ is the number of classes, $M$ is the number of examples per class (we assume same $M$ for each class for simplicity), and $[M]=\{0,\dots,M-1\}$. Let $\cD$ be the evaluation set for this task. In a $J$-shot setting, let $\cZ = \{z^k_j\}^{k\in[K]}_{j\in[J]}$ be a set of \emph{indices} of the support examples ($z^k_j \neq z^k_{j'}$ for any $k,j\neq j'$ and $z^k_j \in [M]$ for any $k,j$). Denote $R(\cZ,\cX,\cD)$ a function computing accuracy of a meta learner on the query set $\cD$ after adapting on $\cA=\{x^k_{z^k_j}\}^{k\in[K]}_{j\in[J]}$. Finding the worst/best case accuracy amounts to finding indices $\cZ$ minimizing/maximizing $R(\cZ,\cX,\cD)$. We solve this greedily by updating a single index $z^k_j$ at a time, holding the rest of $\cZ$ fixed, iterating over $[K]$ and $[J]$ multiple times. We summarize the procedure for finding the worst case accuracy in Algorithm \ref{alg:attack} (the best case accuracy is analogous).\footnote{This is a simple approach, however we found it faster and more efficient then a more sophisticated continuous relaxation using weights of the potential support examples with sparsity constraints.}
Due to the greedy nature, this algorithm finds a local optima, but it is sufficiently effective as we see in the following section.

\begin{algorithm}
   \caption{Finding the worst case support examples}
   \label{alg:attack}
\begin{algorithmic}
\STATE {\bfseries Input:} trained meta-learner, potential support examples $\cX$ to search over, query data $\cD$.
   \REPEAT
   \STATE Initialize indices $\cZ=\{z^k_j\}^{k\in[K]}_{j\in[J]}$ randomly.
   \FOR{$j \in [J]$, $k \in [K]$}
   \STATE $z^k_j \leftarrow \argmin_{z^k_j} R(\cZ,\cX,\cD)$, $z^k_j \in [M]\setminus \{z^k_{j'}\}_{j'\neq j}$
   \ENDFOR
   \UNTIL{$R(\cZ,\cX,\cD)$ stops decreasing}
 \STATE {\bfseries Output:} support examples indices $\cZ = \{z^k_j\}^{k\in[K]}_{j\in[J]}$
\end{algorithmic}
\end{algorithm}

\paragraph{Visualizing the worst case support search} In Figure \ref{fig:algorithm-illustration} we visualize a single iteration of Algorithm \ref{alg:attack} on a 1-shot task from the CIFAR-FS dataset. This is a 5-way task with ``snail'', ``red pepper'', ``bed'', ``plain'', and ``telephone'' as classes. In round 1, i.e. $k=0$, of iteration 1 we start with a random example per class and evaluate post-adaptation accuracy on the query data $\cD$ for each potential support ``snail'', i.e. $\{x^0_m\}_{m \in [M]}$. Here the first line corresponds to ``snail'' indexed with $m=0$, i.e. $x^0_0$, and post-adaptation accuracy of 73.5\%.
We select a ``snail'' support image corresponding to the worst accuracy of 53.8\% and proceed to round 2, i.e. $k=1$, of iteration 1. In round 2 we repeat the same procedure for the corresponding class ``red pepper'' (again the first line corresponds to ``red pepper'' indexed with $m=0$, i.e. $x^1_0$), holding the support ``snail'' image selected previously and support images for other classes fixed. The algorithm finds the worst case support ``red pepper'' image corresponding to the post-adaptation accuracy of 37\%. Then the algorithm proceeds analogously to rounds 3, 4, and 5 of iteration 1, resulting in a combination of support examples corresponding to 2.9\% post-adaptation accuracy. This is already sufficiently low, however on some tasks and higher-shot settings it may be beneficial to run additional iterations. On iteration 2, the algorithm will again go through all the classes starting from the support examples found on iteration 1 instead of random ones. In our experiments we always run Algorithm \ref{alg:attack} for 3 iterations. In Appendix \ref{sec:appendix:convergence} we empirically study the convergence of the algorithm justifying this choice.

\begin{figure}
    \centering
    \includegraphics[width=\textwidth]{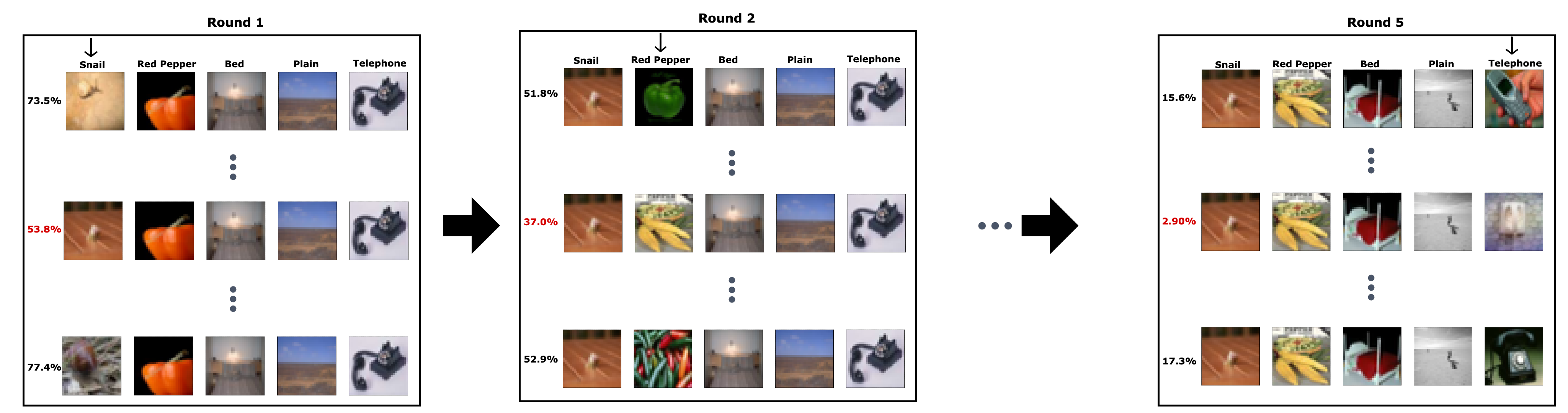}
    \caption{Visualization of Algorithm \ref{alg:attack} on a random 1-shot task for the MetaOptNet-SVM method on the CIFAR-FS dataset. We show 1 iteration of the algorithm here, where the algorithm starts by randomly sampling a support image per class and then iterates over all data samples for all classes.}
    \label{fig:algorithm-illustration}
\end{figure}

\begin{table}[h]
    \caption{Accuracies for different meta-learning methods on the CIFAR-FS dataset}
    \label{tab:accs:maml:cifar_fs}
    \centering
    \begin{small}
    \begin{tabular}{ccccc}
    \toprule
              &           &       Worst acc       &       Avg acc         &       Best acc        \\
    \toprule
    
    
    
    \multirow{3}{*}{MAML}   &   1-shot      &       5.91 $\pm$ 1.94\%        &      56.52 $\pm$ 10.85\%       &       80.04 $\pm$ 6.83\%      \\
                            &   5-shot      &       13.88 $\pm$ 7.28\%        &      70.45 $\pm$ 8.42\%       &       85.15 $\pm$ 6.12\%     \\
                            &   10-shot      &       26.26 $\pm$ 7.85\%        &      70.88 $\pm$ 7.82\%       &       85.81 $\pm$ 6.47\%     \\
                                    \cmidrule(r){1-5}
                                    
    \multirow{3}{*}{MC}   &   1-shot      &       4.73 $\pm$ 1.51\%        &      47.76 $\pm$ 11.28\%       &       69.81 $\pm$ 8.37\%      \\
                            &   5-shot      &       9.29 $\pm$ 6.13\%        &      70.23 $\pm$ 8.82\%       &       85.22 $\pm$ 6.53\%      \\
                            &   10-shot      &       16.95 $\pm$ 10.78\%        &      70.62 $\pm$ 7.89\%       &       87.81 $\pm$ 6.91\%     \\
                                    \cmidrule(r){1-5}
                                    
    \multirow{3}{*}{ProtoNets}   &   1-shot      &       5.06 $\pm$ 2.37\%        &      63.80 $\pm$ 0.71\%       &       84.88 $\pm$ 6.29\%        \\
                                 &   5-shot      &    18.28 $\pm$ 9.85\%         &      80.06 $\pm$ 0.46\%       &       90.41 $\pm$ 4.72\%      \\
                                  &   10-shot      &   27.32 $\pm$ 11.17\%        &      82.95 $\pm$ 0.44\%       &       91.44 $\pm$ 4.35\%     \\
                                    \cmidrule(r){1-5}
                                    
    \multirow{3}{*}{R2D2}       &   1-shot      &       6.14 $\pm$ 2.89\%        &      68.86 $\pm$ 0.68\%       &       86.63 $\pm$ 5.97\%    \\
                                  &   5-shot      &       14.73 $\pm$ 8.21\%        &      82.29 $\pm$ 0.44\%       &       92.34 $\pm$ 4.21\%       \\
                               &   10-shot      &       29.30 $\pm$ 12.16\%        &      85.74 $\pm$ 0.42\%       &       93.41 $\pm$ 3.81\%       \\
                                    \cmidrule(r){1-5}

    \multirow{3}{*}{MetaOptNet-Ridge}   &   1-shot      &       5.17 $\pm$ 2.97\%        &      71.21 $\pm$ 0.67\%       &       87.40 $\pm$ 5.95\%    \\
                                       &   5-shot      &       16.81 $\pm$ 11.57\%        &      84.18 $\pm$ 0.45\%       &       93.15 $\pm$ 4.43\%        \\
                                      &   10-shot      &       32.10 $\pm$ 16.26\%        &      86.82 $\pm$ 0.42\%       &       93.89 $\pm$ 3.59\%       \\
                                    \cmidrule(r){1-5}

    \multirow{3}{*}{MetaOptNet-SVM}   &   1-shot      &       5.27 $\pm$ 2.82\%        &      70.79 $\pm$ 0.69\%       &       87.65 $\pm$ 5.76\%     \\
                                       &   5-shot      &       14.92 $\pm$ 8.60\%        &      83.98 $\pm$ 0.44\%       &       93.36 $\pm$ 4.60\%        \\
                                       &   10-shot      &       22.24 $\pm$ 10.13\%        &      87.11 $\pm$ 0.40\%       &       93.56 $\pm$ 3.98\%      \\
    \bottomrule
    
    \end{tabular}
    \end{small}
\end{table}

\begin{table}[h]
    \caption{Accuracies for different meta-learning methods on the FC100 dataset}
    \label{tab:accs:maml:fc100}
    \centering
    \begin{small}
    \begin{tabular}{ccccc}
    \toprule
               &           &       Worst acc       &       Avg acc         &       Best acc        \\
    \toprule
    
    
    
    \multirow{3}{*}{MAML}   &   1-shot      &       7.32 $\pm$ 1.49\%        &      31.89 $\pm$ 6.75\%       &       50.13 $\pm$ 6.75\%      \\
                               &   5-shot      &       7.51 $\pm$ 3.59\%        &      43.58 $\pm$ 7.61\%       &       61.64 $\pm$ 8.50\%      \\
                              &   10-shot      &       10.51 $\pm$ 2.86\%        &      44.30 $\pm$ 6.89\%       &       64.98 $\pm$ 7.27\%     \\
                                    \cmidrule(r){1-5}
                                    
    
    \multirow{3}{*}{MC}   &   1-shot      &       6.53 $\pm$ 1.56\%        &     36.56 $\pm$ 8.05\%       &       56.75 $\pm$ 5.57\%      \\
                            &   5-shot      &       5.17 $\pm$ 1.34\%        &      47.12 $\pm$ 7.02\%       &       66.33 $\pm$ 5.11\%      \\
                            &   10-shot      &       8.35 $\pm$ 3.85\%        &      49.12 $\pm$ 6.67\%       &       65.27 $\pm$ 5.58\%     \\
                                    \cmidrule(r){1-5}
                                    
    \multirow{3}{*}{ProtoNets}   &   1-shot      &       5.25 $\pm$ 1.69\%        &      37.21 $\pm$ 0.50\%       &       59.75 $\pm$ 6.39\%        \\
                              &   5-shot      &       5.57 $\pm$ 2.99\%        &      50.49 $\pm$ 0.48\%       &       70.31 $\pm$ 6.41\%      \\
                               &   10-shot      &       9.93 $\pm$ 4.39\%        &      56.15 $\pm$ 0.47\%       &       72.94 $\pm$ 6.23\%     \\
                                    \cmidrule(r){1-5}
                                    
    \multirow{3}{*}{R2D2}   &   1-shot      &       6.13 $\pm$ 1.63\%        &      37.91 $\pm$ 0.48\%       &       59.67 $\pm$ 6.17\%         \\
                       &   5-shot      &       6.72 $\pm$ 2.85\%        &      54.35 $\pm$ 0.49\%       &       74.34 $\pm$ 6.59\%      \\
                      &   10-shot      &       12.00 $\pm$ 7.07\%        &      61.72 $\pm$ 0.47\%       &       77.94 $\pm$ 3.56\%       \\
                                    \cmidrule(r){1-5}

    \multirow{3}{*}{MetaOptNet-Ridge}   &   1-shot      &       5.44 $\pm$ 1.57\%        &      39.13 $\pm$ 0.51\%       &       61.28 $\pm$ 6.18\%    \\
                               &   5-shot      &       5.97 $\pm$ 3.29\%        &      53.20 $\pm$ 0.47\%       &       72.65 $\pm$ 6.43\%        \\
                               &   10-shot      &       11.56 $\pm$ 2.78\%        &      59.52 $\pm$ 0.48\%       &       75.30 $\pm$ 1.56\%      \\
                                    \cmidrule(r){1-5}

    \multirow{2}{*}{MetaOptNet-SVM}   &   1-shot      &       5.29 $\pm$ 1.57\%        &      38.19 $\pm$ 0.48\%       &       60.14 $\pm$ 6.12\%      \\
                               &   5-shot      &       5.75 $\pm$ 2.83\%        &      54.45 $\pm$ 0.49\%       &       74.01 $\pm$ 6.64\%       \\
                              &   10-shot      &       9.54 $\pm$ 3.86\%        &      60.52 $\pm$ 0.48\%       &       77.03 $\pm$ 6.27\%       \\
    
    \bottomrule
    
    \end{tabular}
    \end{small}
\end{table}

\begin{table}[h]
    \caption{Accuracies for different meta-learning methods on the miniImageNet dataset}
    \label{tab:accs:maml:miniimagenet}
    \centering
    \begin{small}
    \begin{tabular}{ccccc}
    \toprule
               &           &       Worst acc       &       Avg acc         &       Best acc        \\
    \toprule
    
    
    
    \multirow{3}{*}{MAML}   &   1-shot      &       6.08 $\pm$ 1.77\%      &      47.13 $\pm$ 8.78\%       &       71.39 $\pm$ 6.74\%      \\
                            &   5-shot      &       10.15 $\pm$ 8.40\%        &      57.69 $\pm$ 7.92\%       &       79.60 $\pm$ 5.43\%     \\
                            &   10-shot      &       20.88 $\pm$ 7.22\%        &      59.52 $\pm$ 8.34\%       &       79.94 $\pm$ 3.41\%     \\
                                    \cmidrule(r){1-5}
                                    
    
    \multirow{2}{*}{MC}   &   1-shot      &       4.46 $\pm$ 2.05\%        &      45.03 $\pm$ 8.79\%       &       65.98 $\pm$ 6.23\%      \\
                            &   5-shot      &       5.79 $\pm$ 3.45\%        &      60.47 $\pm$ 7.57\%       &       75.09 $\pm$ 4.72\%      \\
                            &   10-shot      &       9.67 $\pm$ 3.84\%       &      60.54 $\pm$ 7.45\%       &       73.23 $\pm$ 6.24\%     \\
                                    \cmidrule(r){1-5}
                                    
    \multirow{3}{*}{ProtoNets}   &   1-shot      &       4.69 $\pm$ 2.16\%        &      53.42 $\pm$ 0.59\%       &       76.46 $\pm$ 5.64\%        \\
                               &   5-shot      &       9.53 $\pm$ 4.91\%        &      70.60 $\pm$ 0.43\%       &       85.33 $\pm$ 3.60\%      \\
                               &   10-shot      &       15.39 $\pm$ 5.17\%        &      75.91 $\pm$ 0.38\%       &       87.31 $\pm$ 3.24\%    \\
                                    \cmidrule(r){1-5}
                                    
    \multirow{3}{*}{R2D2}   &   1-shot      &       6.10 $\pm$ 3.27\%        &      56.09 $\pm$ 0.58\%       &       78.17 $\pm$ 5.33\%       \\
                               &   5-shot      &       12.03 $\pm$ 5.51\%        &      72.04 $\pm$ 0.43\%       &       86.64 $\pm$ 3.34\%       \\
                               &   10-shot      &       15.78 $\pm$ 6.10\%        &      77.32 $\pm$ 0.36\%       &       86.70 $\pm$ 1.99\%       \\
                                    \cmidrule(r){1-5}

    \multirow{2}{*}{MetaOptNet-Ridge}   &   1-shot      &       5.19 $\pm$ 3.21\%        &      57.94 $\pm$ 0.62\%       &       79.15 $\pm$ 5.05\%    \\
                               &   5-shot      &       9.83 $\pm$ 4.05\%        &      74.80 $\pm$ 0.43\%       &       84.81 $\pm$ 4.12\%        \\
                              &   10-shot      &       19.16 $\pm$ 10.07\$        &      80.31 $\pm$ 0.36\%       &       89.72 $\pm$ 2.99\%      \\
                                    \cmidrule(r){1-5}

    \multirow{2}{*}{MetaOptNet-SVM}   &   1-shot      &       4.82 $\pm$ 3.03\%        &      59.03 $\pm$ 0.62\%       &       80.38 $\pm$ 5.40\%      \\
                               &   5-shot      &       9.52 $\pm$ 4.88\%        &      75.54 $\pm$ 0.40\%       &       85.41 $\pm$ 3.86\%        \\
                              &   10-shot      &       15.93 $\pm$ 7.64\%        &      80.16 $\pm$ 0.37\%       &       90.63 $\pm$ 2.71\%       \\
    \bottomrule
    
    \end{tabular}
    \end{small}
\end{table}

\subsection{Performance range results}
\label{sec:results}
We summarize the worst, average and best accuracies of six meta-learning algorithms on three benchmark datasets (see Section \ref{sec:data} for data descriptions) in 1-shot, 5-shot, and 10-shot setting in Tables \ref{tab:accs:maml:cifar_fs}, \ref{tab:accs:maml:fc100}, and \ref{tab:accs:maml:miniimagenet}. All meta-learners are trained using code from the authors or more modern meta-learning libraries \cite{arnold2020learn2learn} (see Appendix \ref{sec:appendix:experiment:details} for implementation and additional experimental details). For evaluation we randomly partition each class in each task into 400 potential support examples composing $\cX$ and 200 query examples composing $\cD$ (all datasets have 600 examples per class). To compute average accuracy we randomly sample corresponding number of support examples per class; for the best and the worst case accuracies we use Algorithm \ref{alg:attack} to search over support examples in $\cX$.

Our key finding is the large range of performances of \emph{all} meta-learning algorithms considered in 1-, 5-, and 10-shot settings. Prototypical networks have slightly better worst-case 5-shot accuracy on CIFAR-FS, but it has large variance and is likely due to our algorithm finding poor local optima on one of the tasks. We also see no differences between meta-learners adapting end-to-end, i.e. MAML and MC, and those adapting only the last linear classification layer, i.e. R2D2 and MetaOptNet. \citet{goldblum2020unraveling} showed that the last-layer adaptation methods produce good quality linear-separable embeddings. One could expect such methods to be less sensitive to support data, however, as we discuss in Section \ref{sec:margins}, linear separability is not sufficient in the few-shot learning setting. 10-shot worst-case accuracies are not as poor (despite mostly remaining worse than random guessing), but we expect that with more support data available, the gap would narrow.\footnote{With enough support data, last-layer methods should succeed as their embeddings are linearly separable.} Finally, we note that the best-case accuracy is significantly better, especially in the 1-shot setting.

\subsection{Worst case support examples are not artifacts}
\label{sec:artifacts}

We have demonstrated that it is possible to find support examples yielding poor post-adaptation performance of a variety of meta-learners. It is also important to understand the nature of these worst-case examples: are they data artifacts, i.e. outliers or miss-labeled examples, or realistic images that could cause failures in practice? We argue that the latter is the case.

\begin{enumerate}
    \item In Figure \ref{fig:exemplars} we presented several examples of the worst-case support images on CIFAR-FS: they are correctly labeled and appear representative of the respective classes. Inspecting the images on the left closer we note that it is often not easy to notice visually that such support examples could result in a poor performance without significant expert knowledge of the dataset. Only the cellphone image labelled ``telephone'' and grey-scale image labeled ``plain'' seem potentially problematic. On the other hand, ``snail'' and ``bed'' images all appear normal. On the right figure, majority of the images also appear reasonable.
    \item 10-shot setting should be a lot more resilient to outliers, however Algorithm \ref{alg:attack} continues to be successful in 10-shot setting, finding ten different examples per class leading to accuracy slightly better than a random predictor as shown in Tables \ref{tab:accs:maml:cifar_fs}, \ref{tab:accs:maml:fc100}, and \ref{tab:accs:maml:miniimagenet}.
    \item In Figure \ref{fig:app:accs-attack} we present histograms of accuracies visualizing the first iteration over classes of Algorithm \ref{alg:attack} in 1-shot learning on CIFAR-FS. The right most histogram corresponds to post-adaptation accuracies for different choices of support image for class 0 and random choices for classes 1-4. The subsequent histogram is for different choices of support images for class 1, where image for class 0 is chosen with Algorithm \ref{alg:attack} and classes 2-4 are random, and analogously for the remaining three histograms. We see that the lower accuracy tails of the first two histograms contain multiple worst-case support examples in the corresponding classes. By the third histogram, the range of accuracies is below 50\% for \emph{all} possible support examples in the corresponding classes.
    \item In Figure \ref{fig:app:test_acc_hist} we present histogram of accuracies of 3991 unique combinations from CIFAR-FS of 1-shot support examples evaluated by Algorithm \ref{alg:attack} throughout 3 iterations. There are 3335 distinct sets of 5 examples each with less than 50\% post-adaptation accuracy.
\end{enumerate}
We present analogous analysis for other meta-learners and datasets in Appendix \ref{sec:appendix:worstcase_examples}, where we also conclude that worst-case adaptation examples are realistic and can cause malfunctions in practice.

\begin{figure}
    \begin{minipage}[c]{0.48\textwidth}
    \centering
    \includegraphics[width=1.0\linewidth]{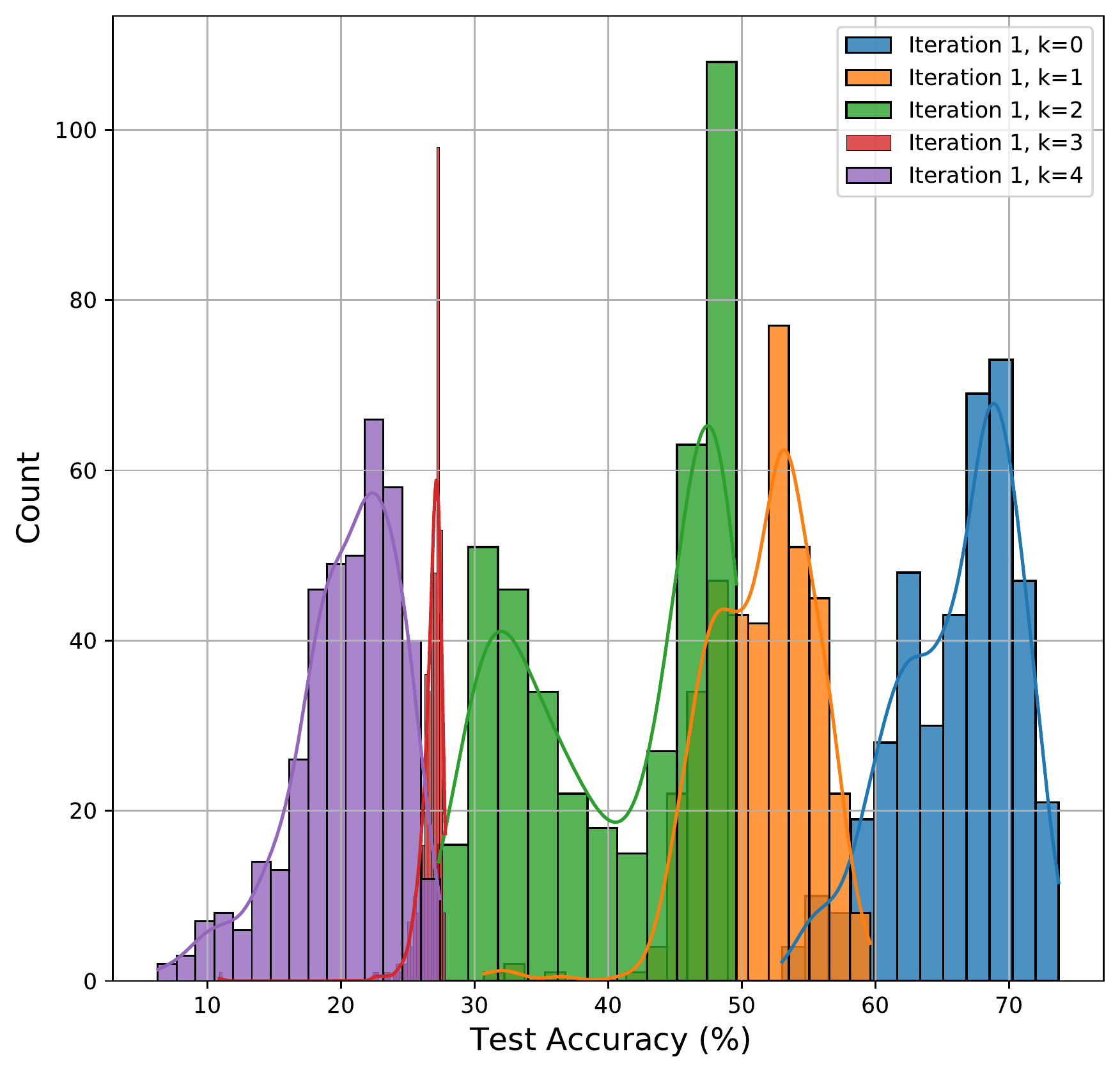}
    \vspace*{-6mm}
    \caption{Histogram of test accuracies on the first iteration of Algorithm \ref{alg:attack} as it progresses through classes evaluating 1-shot combinations of images for adaptation on a CIFAR-FS test task with MetaOptNet-SVM meta-learner.} 
    \label{fig:app:accs-attack}
  \end{minipage} \hfill
  \begin{minipage}[c]{0.48\textwidth}
    \centering
    \includegraphics[width=1.0\linewidth]{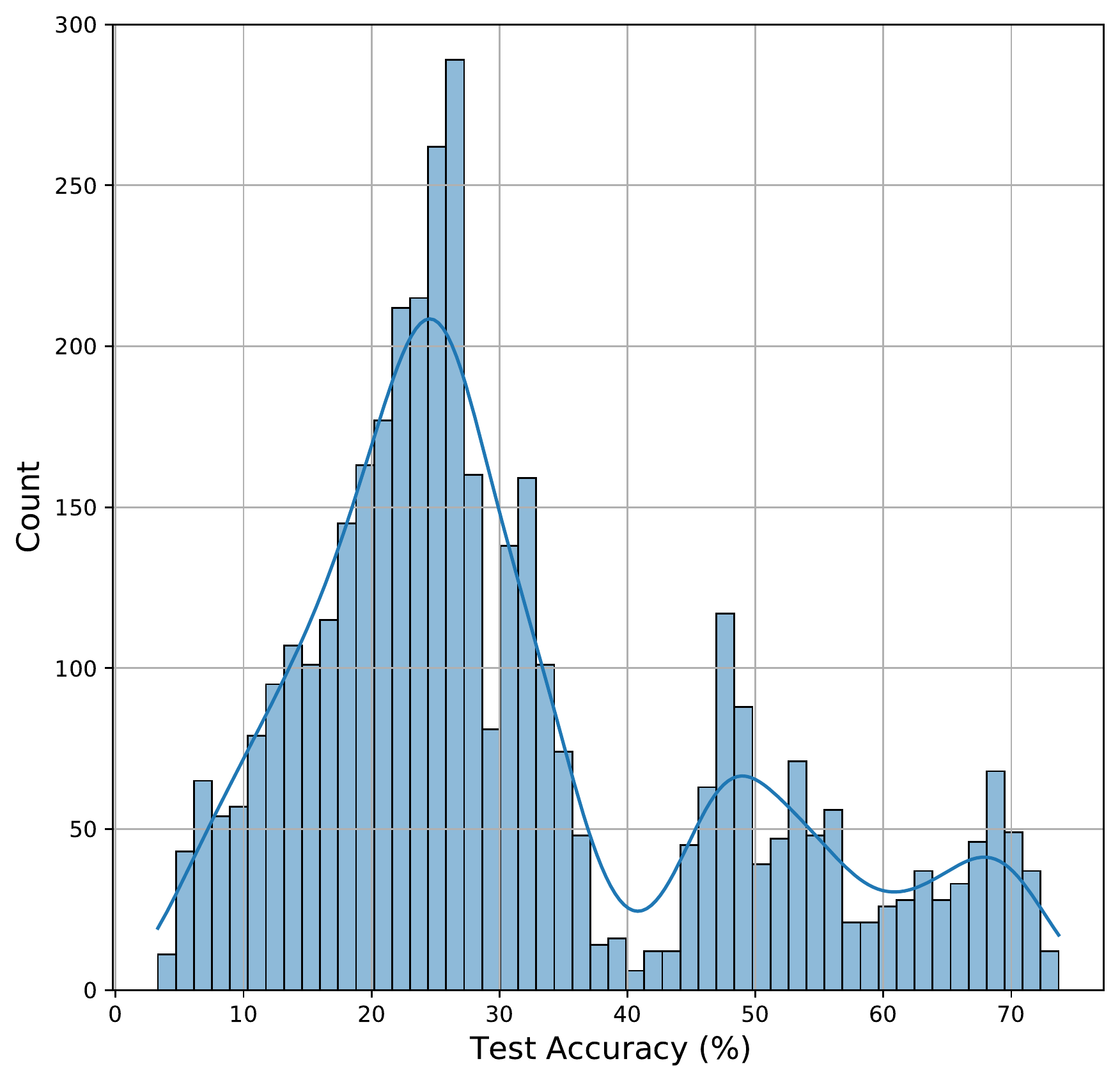}
    \vspace*{-6mm}
    \caption{Histogram of test accuracies computed during 3 iterations of Algorithm \ref{alg:attack} evaluating different unique 1-shot combinations of images for adaptation on a CIFAR-FS test task with MetaOptNet-SVM meta-learner.} 
    \label{fig:app:test_acc_hist}
  \end{minipage}
\end{figure}

\subsection{Improving support data robustness with adversarial training}
\label{sec:advtraining}
The issue of robustness has been studied in many contexts in machine learning, and adversarial vulnerability of deep learning based models has been explored extensively in the recent literature \citep{goodfellow2014explaining,carlini2017towards,madry2017towards}. While the support data sensitivity of meta learners presented in this work is a new type of non-robustness, it is possible to approach the problem borrowing ideas from adversarial training \citep{madry2017towards}. The high-level idea of adversarial training is to use a mechanism exposing non-robustness to guide the data used for model updates. For example, if a model is sensitive to perturbations in the inputs, for an incoming batch of data we find the worst-case (by maximizing the loss) perturbations to the inputs and use the perturbed inputs to updated model parameters. To converge, adversarial training needs to find model parameters such that the mechanism exposing non-robustness can no-longer damage the performance. Adversarial training has theoretical guarantees for convex models \citep{wald1945statistical} and has been shown empirically to be successful in defending against adversarial attacks on deep learning models \citep{madry2017towards}.

We use adversarial training scheme in an attempt to achieve robustness to support data in meta learning. Specifically, we use Algorithm \ref{alg:attack} to find worst-case adaptation examples during training instead of using random ones as in standard training. The result is quite intriguing: Tables \ref{tab:cifarfs-adversarial} and \ref{tab:fc100-adversarial} summarize performance of adversarially (in a sense of support data) trained meta-learners on train and test tasks. In most cases, adversarial training converged, i.e. we are no longer able to find detrimental worst-case support examples with Algorithm \ref{alg:attack} on the \emph{training} tasks, however we observe no improvements of the worst-case accuracy on the \emph{test} tasks. Our experiment demonstrates that support data sensitivity in meta-learning is not easily addressed with existing methods and requires exploring new solution paths.

\begin{table}[h]

\caption{Accuracies for different meta-learning methods trained in the standard manner and adversarially on the CIFAR-FS dataset}
\label{tab:cifarfs-adversarial}
\centering
\begin{small}
\begin{tabular}{cccccc}
\toprule
     Method                     &     Dataset                   &   Training          & Worst acc           & Avg acc            & Best acc           \\
\toprule
\multirow{4}{*}{R2D2}           & \multirow{2}{*}{Train} & Standard      & 13.83 $\pm$ 9.35 \% & 87.68 $\pm$ 0.56\% & 96.91 $\pm$ 3.15\% \\
                                &                               & Adversarial & 46.34 $\pm$ 14.93\% & 88.19 $\pm$ 0.59\%  & 97.94 $\pm$ 2.89\% \\
                                \cmidrule(r){2-6}
                                & \multirow{2}{*}{Test}     & Standard      & 6.14 $\pm$ 2.89\%   & 68.86 $\pm$ 0.68\% & 86.63 $\pm$ 5.97\% \\
                                &                               & Adversarial & 6.76 $\pm$ 2.69\%   & 68.62 $\pm$ 0.66\% & 87.46 $\pm$ 5.86\% \\
\midrule
\multirow{4}{*}{MetaOptNet-Ridge} & \multirow{2}{*}{Train} & Standard      & 77.75 $\pm$ 17.80\% & 99.23 $\pm$ 0.15\% & 99.81 $\pm$ 0.99\% \\
                                &                               & Adversarial & 90.08 $\pm$ 14.22\% & 98.87 $\pm$ 0.21\% & 99.84 $\pm$ 0.66\% \\
                                \cmidrule(r){2-6}
                                & \multirow{2}{*}{Test}     & Standard      & 5.17 $\pm$ 2.96\%   & 71.21 $\pm$ 0.67\% & 87.41 $\pm$ 5.95\% \\
                                &                               & Adversarial & 5.38 $\pm$ 2.79\%   & 71.81 $\pm$ 0.67\% & 88.42 $\pm$ 5.60\% \\
\midrule
\multirow{4}{*}{MetaOptNet-SVM}   & \multirow{2}{*}{Train} & Standard      & 9.74 $\pm$ 9.00\%   & 91.93 $\pm$ 0.47\% & 97.58 $\pm$ 2.51\% \\
                                &                               & Adversarial & 93.47 $\pm$ 11.84\% & 99.08 $\pm$ 0.19\% & 99.81 $\pm$ 0.89\% \\
                                \cmidrule(r){2-6}
                                & \multirow{2}{*}{Test}     & Standard      & 5.27 $\pm$ 2.82\%   & 70.79 $\pm$ 0.69\% & 87.66 $\pm$ 5.76\% \\
                                &                               & Adversarial & 4.97 $\pm$ 2.58\%   & 71.11 $\pm$ 0.70\% & 87.84 $\pm$ 5.93\% \\
\bottomrule
\end{tabular}
\end{small}
\end{table}

\begin{table}[h]

\caption{Accuracies for different meta-learning methods trained in the standard manner and adversarially on the FC100 dataset}
\label{tab:fc100-adversarial}
\centering
\begin{small}
\begin{tabular}{cccccc}
\toprule
    Method                       &      Dataset                 &     Training        & Worst acc            & Avg acc            & Best acc           \\
\toprule
\multirow{4}{*}{R2D2}           & \multirow{2}{*}{Train} & Standard       & 8.95 $\pm$ 6.62\%    & 84.24 $\pm$ 0.60\% & 95.48 $\pm$ 3.45\% \\
                                &                               & Adversarial & 45.01 $\pm$ 12.55\%  & 87.50 $\pm$ 0.57\% & 97.11 $\pm$ 2.97\% \\
                                \cmidrule(r){2-6}
                                & \multirow{2}{*}{Test}     & Standard       & 6.13 $\pm$ 1.63\%    & 37.91 $\pm$ 0.48\% & 59.67 $\pm$ 6.17\% \\
                                &                               & Adversarial  & 6.44 $\pm$ 1.68\%    & 38.70 $\pm$ 0.46\% & 60.87 $\pm$ 6.30\% \\
\midrule
\multirow{4}{*}{MetaOptNet-Ridge} & \multirow{2}{*}{Train} & Standard       & 14.27 $\pm$ 11.32 \% & 92.54 $\pm$ 0.45\% & 97.62 $\pm$ 2.51\% \\
                                &                               & Adversarial  & 79.67 $\pm$ 14.69\%  & 96.83 $\pm$ 0.38\% & 98.76 $\pm$ 2.26\% \\
                                \cmidrule(r){2-6}
                                & \multirow{2}{*}{Test}     & Standard       & 5.44 $\pm$ 1.57\%    & 39.13 $\pm$ 0.51\% & 61.29 $\pm$ 6.18\% \\
                                &                               & Adversarial  & 5.47 $\pm$ 1.89\%    & 36.87 $\pm$ 0.48\% & 59.07 $\pm$ 6.63\% \\
\midrule
\multirow{4}{*}{MetaOptNet-SVM}   & \multirow{2}{*}{Train} & Standard       & 18.05 $\pm$ 12.64\%  & 94.13 $\pm$ 0.38\% & 98.28 $\pm$ 2.09\% \\
                                &                               & Adversarial  & 90.37 $\pm$ 12.18\%  & 97.12 $\pm$ 0.39\% & 99.06 $\pm$ 1.98\% \\
                                \cmidrule(r){2-6}
                                & \multirow{2}{*}{Test}     & Standard       & 5.29 $\pm$ 1.57\%    & 38.19 $\pm$ 0.48\% & 60.14 $\pm$ 6.11\% \\
                                &                               & Adversarial  & 6.08 $\pm$ 1.78\%    & 35.92 $\pm$ 0.45\% & 60.30 $\pm$ 6.26\% \\
\bottomrule
\end{tabular}
\end{small}
\end{table}


\section{Meta-learning margin analysis}
\label{sec:margins}
\begin{figure}[!h]
  \centering
  
  \subfigure[Train task embeddings (standard training)]{\includegraphics[width=0.24\textwidth]{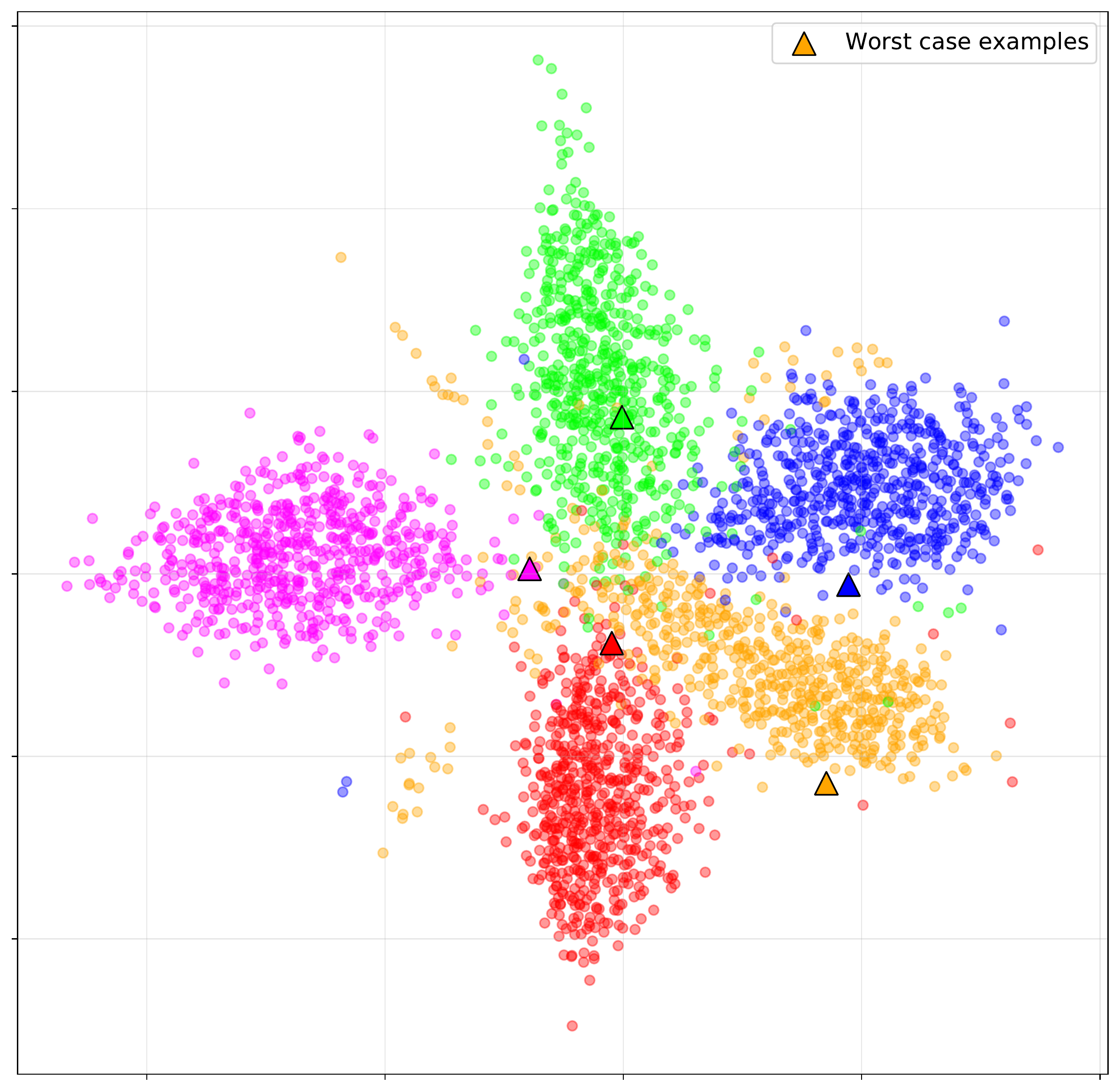}} \hfill
  \subfigure[Test task embeddings (standard training)]{\includegraphics[width=0.24\textwidth]{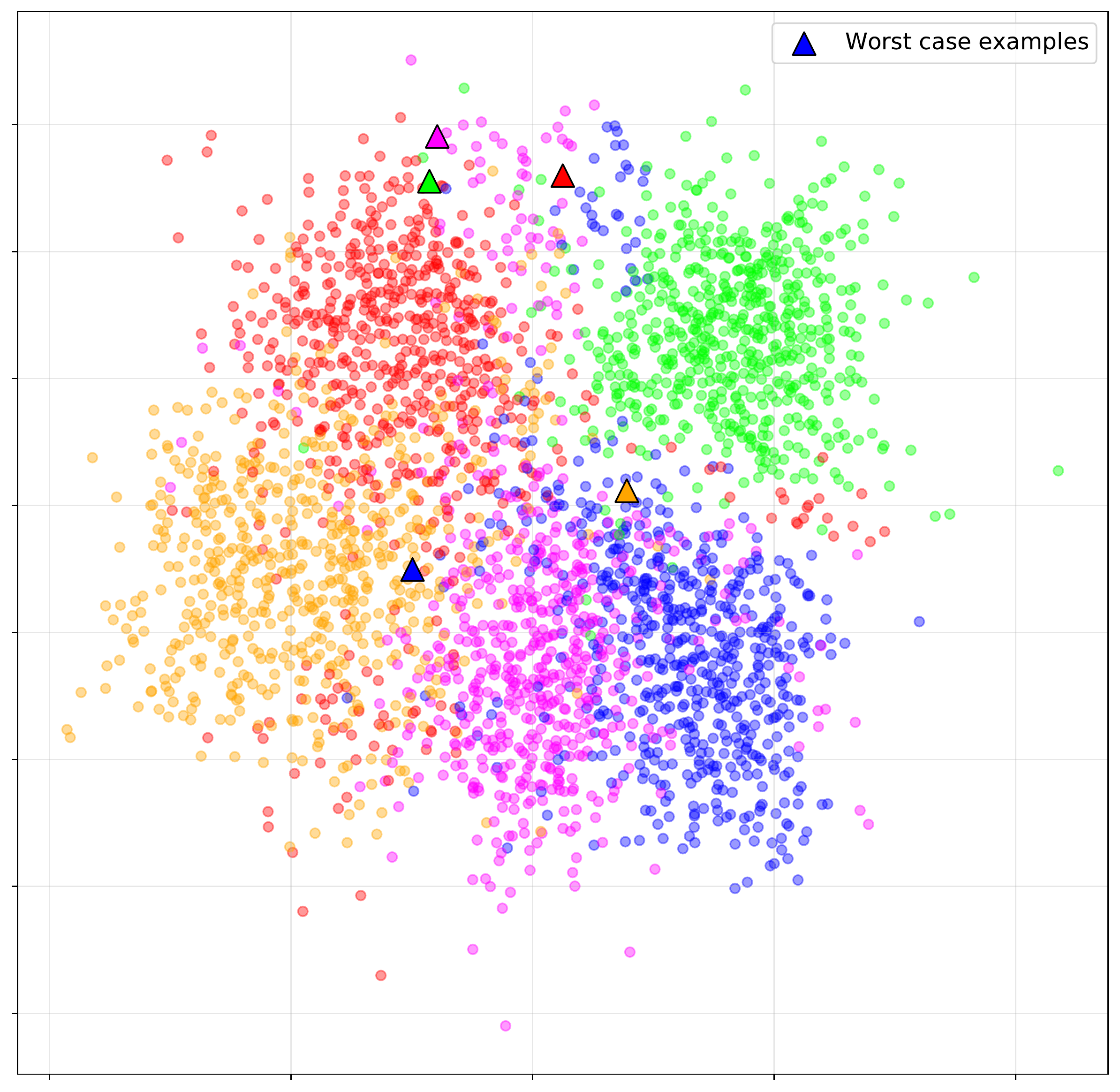}}\hfill
  \subfigure[Train task embeddings (adversarial training)]{\includegraphics[width=0.24\textwidth]{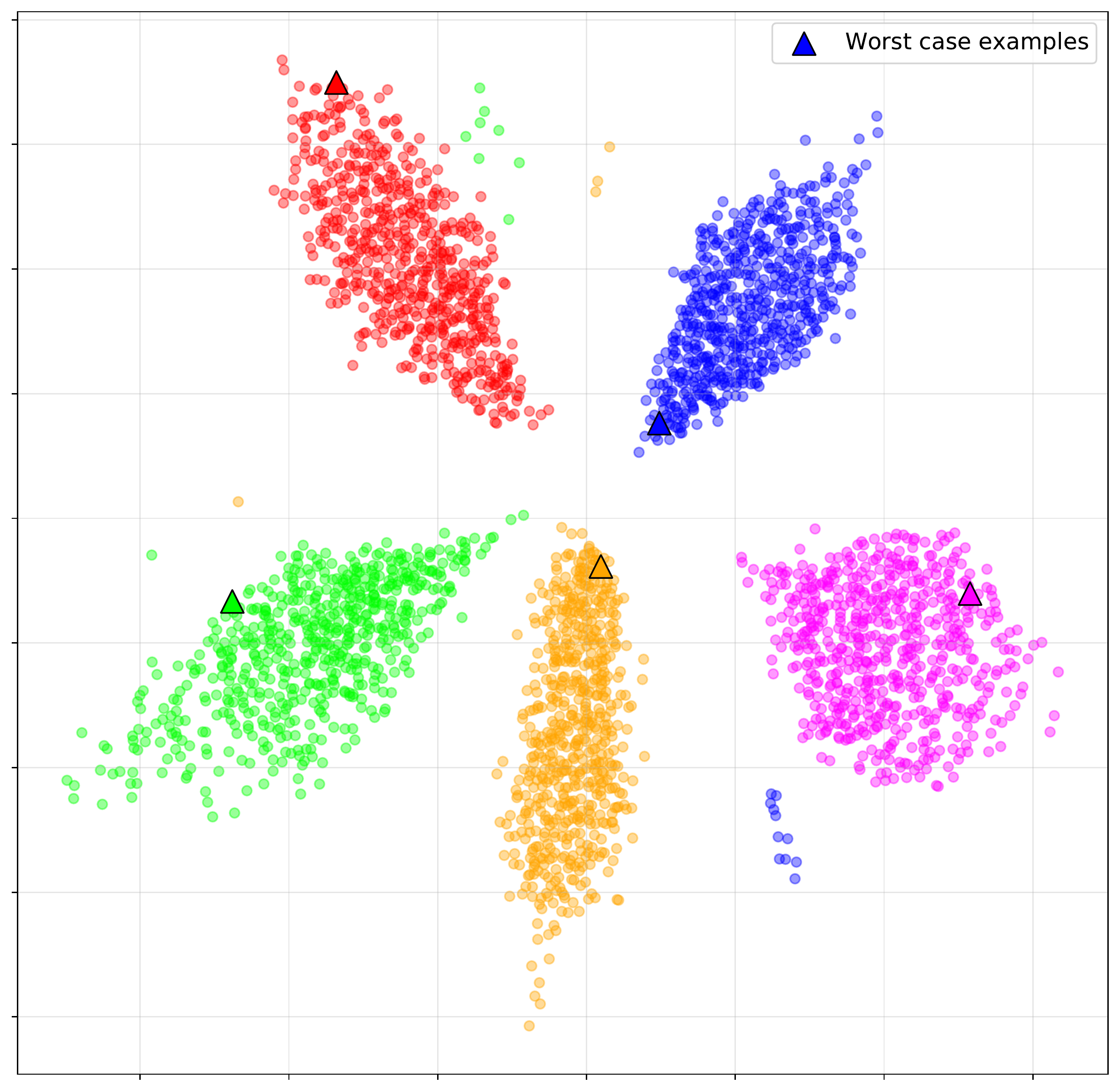}} \hfill
  \subfigure[Test task embeddings (adversarial training)]{\includegraphics[width=0.24\textwidth]{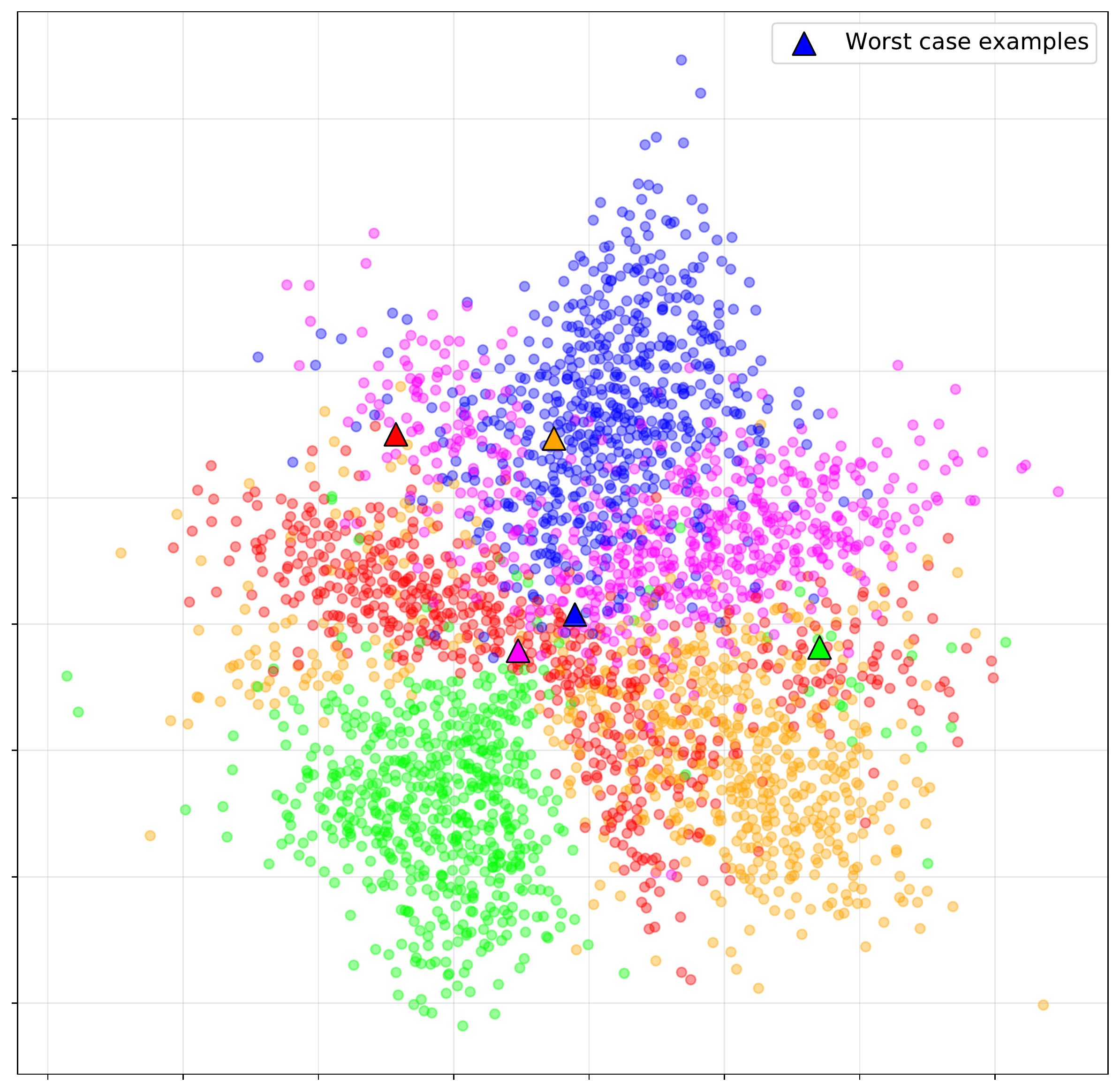}}\hfill
  
  \caption{Projected embeddings of MetaOptNet-SVM for a train and a test task query data from CIFAR-FS. We compare standard and adversarial training discussed in Section \ref{sec:advtraining}. Points are colored with their labels. Highlighted points are the worst-case support examples selected with Algorithm \ref{alg:attack}.
}
  \label{fig:embeddings}
\end{figure}

Meta-learners, specifically those only adapting the last layer linear classifier, produce embeddings that appear (approximately) linearly separable even when projected into two dimensions as shown in Figure \ref{fig:embeddings} (we used Multidimensional Scaling \citep{mead1992review} to preserve the relative cluster sizes and distances between clusters). In the supervised learning context this would be considered an easy problem for, e.g., linear SVM as in the MetaOptNet-SVM meta-learner. The problem, however, is that in the supervised learning context we typically have sufficient data, while in the few-shot setting meta-learners are restricted to as little as a single example per class in a high-dimensional space.

Lets take another look at Figure \ref{fig:embeddings}. These are Multidimensional Scaling \citep{mead1992review} projections of the embeddings obtained with the MetaOptNet trained on CIFAR-FS. Embeddings in (a) and (c) correspond to a query data from a train task for standard and adversarially trained models, and embeddings in (b) and (d) to a query data from a test task for the corresponding models. All embeddings appear well-clustered, however embeddings in (c) have the largest separation and the smallest within-class variance. This wouldn't make a significant difference in supervised learning with enough data, but makes a big difference for meta-learning: when using Algorithm \ref{alg:attack} to find the worst-case support examples (highlighted in the figures), the corresponding accuracies are 1.8\% for (a), 2.90\% for (b), 99.4\% for (c), and 5.70\% for (d).
We see that although all embeddings are well-separated\footnote{For comparison, the accuracies of the corresponding supervised learning problems (i.e. linear classifiers trained using embeddings of all 400 per class potential support examples $\cX$, rather than a single example per class) are 99.6\% for (a) and 94.3\% for (b).}, only (c) is robust to support data selection.
We present the projected embeddings for a variety of meta-learners on the FC100 dataset in Appendix \ref{sec:appendix:embeddings}.
As we discuss next, robust meta-learning, in addition to vanilla linear-separability, requires features with bigger class separation and lower intra-class variances.

In the following theorem, we show that as long as the class embeddings are sufficiently separated, the probability of any two points sampled from the classes leading to a max-margin classifier with large misclassification rate is exponentially small. We note the high degree of separation (linear in embedding dimension) necessary to guarantee robustness to the choice of support data. This suggests unless the class embeddings are well-separated, the resulting meta-learning algorithm will be sensitive to the choice of support data.

\begin{theorem}
Consider a (binary) Gaussian discriminant analysis model: 
\[
\begin{aligned}
X\mid Y=1 \sim N(\mu, \sigma^2I_d), \\
X\mid Y=0 \sim N(-\mu,\sigma^2I_d).
\end{aligned}
\]
As long as $\mu = (d+\sqrt{2dt} + 2t)\sigma$ for some $t>0$, then the max-margin classifier between two points sampled independently from each cluster has misclassification rate at most $\Phi(-d-\sqrt{2dt} - 2t)$ with probability at least $(1-e^{-t})^2$.
\end{theorem}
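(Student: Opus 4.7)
The plan is to work with $\sigma = 1$ (by scaling invariance) and write the two support points as $X_1 = \mu + G_1$, $X_0 = -\mu + G_0$ with $G_0, G_1 \sim N(0, I_d)$ independent. The max-margin classifier between two points is their perpendicular bisector $\{x : \langle x - m, w\rangle = 0\}$ with $w = X_1 - X_0$ and $m = (X_0 + X_1)/2$. A direct Gaussian computation shows that the per-class error is $\Phi(-A)$ (for class $1$) or $\Phi(-B)$ (for class $0$), where $A = \langle \mu - m, w\rangle / \|w\|$ and $B = \langle \mu + m, w\rangle / \|w\|$ are the signed Mahalanobis distances from the class means to the boundary. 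So the whole theorem reduces to a high-probability lower bound on $\min(A,B)$.

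The geometric heart of the argument is the identity $X_1 - m = w/2$, which says $X_1$ itself sits at signed distance $\|w\|/2$ from the boundary on the correct side. Combining this with Cauchy--Schwarz yields
\[
A \;\geq\; \tfrac{1}{2}\|w\| - \|X_1 - \mu\| \;=\; \tfrac{1}{2}\|w\| - \|G_1\|,
\]
and symmetrically $B \geq \tfrac{1}{2}\|w\| - \|G_0\|$. A second triangle inequality gives $\|w\| = \|2\mu + (G_1 - G_0)\| \geq 2\|\mu\| - \|G_0\| - \|G_1\|$, so everything reduces to an upper bound on $\|G_0\|$ and $\|G_1\|$.

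For this last step I would invoke the Laurent--Massart $\chi^2$ tail bound: $\|G_i\|^2 \leq d + 2\sqrt{dt} + 2t$ with probability at least $1 - e^{-t}$ for each $i$. Since $G_0$ and $G_1$ are independent, both bounds hold simultaneously with probability at least $(1-e^{-t})^2$, which is exactly where the stated probability comes from. On that event, both $A$ and $B$ are bounded below by $\|\mu\|$ minus an explicit function of $\sqrt{d + 2\sqrt{dt} + 2t}$. The hypothesis $\|\mu\|/\sigma = d + \sqrt{2dt} + 2t$ is designed so that this lower bound collapses to $d + \sqrt{2dt} + 2t$, at which point the class-conditional misclassification rates are each at most $\Phi(-(d + \sqrt{2dt} + 2t))$.

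The piece I expect to be trickiest is precisely this final arithmetic reconciliation: chaining two triangle inequalities with two Laurent--Massart bounds naively leaves residual $\sqrt{d + 2\sqrt{dt}+2t}$ terms, and collapsing them into the exact closed form in the statement likely requires replacing one of the triangle steps by a direct Gaussian tail estimate on $\langle \mu, G_i\rangle / \|\mu\|$ (which has variance $1$, independent of $d$) rather than the crude bound $|\langle \mu, G_i\rangle|\leq \|\mu\|\|G_i\|$. Conceptually, the proof is a concentration argument saying ``both $X_0,X_1$ land within a $\chi^2_d$-typical radius of their means, which pins the perpendicular bisector close enough to the Bayes-optimal hyperplane $\{x:\langle x,\mu\rangle=0\}$'', and the quadratic-in-$d$ separation required in the hypothesis is exactly the cost of this geometric margin translation.
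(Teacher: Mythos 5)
Your proposal is essentially the paper's own argument, made explicit. Both proofs (i) use the Laurent--Massart $\chi^2_d$ tail bound to confine each support point to a ball of controlled radius around its class mean, which is where the probability $(1-e^{-t})^2$ comes from in both cases (the paper calls these balls the cores $\mathcal{C}_1,\mathcal{C}_0$); (ii) argue geometrically that the perpendicular bisector of any such pair stays far from both means; and (iii) convert that distance into a misclassification rate via the Gaussian CDF. The only real difference is step (ii): the paper asserts, as a ``tedious geometric exercise,'' that when $\mu>2r\sigma$ the bisector separates the two cores, so each mean is at distance at least $r\sigma$ from it, whereas you derive the distance bound from the identity $X_1-m=w/2$ plus two triangle inequalities. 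Your version is more careful but, as you note, leaves a residual: you obtain a misclassification rate of $\Phi\bigl(-(d+\sqrt{2dt}+2t)+2\sqrt{d+2\sqrt{dt}+2t}\bigr)$ rather than the stated $\Phi\bigl(-(d+\sqrt{2dt}+2t)\bigr)$.

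You should not regard that residual as a failure of your approach, because the paper's proof does not close it either; it merely hides it. Its geometric lemma needs $\mu>2r\sigma$ with $r=d+\sqrt{2dt}+2t$, but the theorem hypothesizes only $\mu=r\sigma$, at which point the two cores are tangent and the separation claim fails for pairs near the tangency point; the probability computation treats $r$ as the $\chi^2_d$ quantile when the relevant quantity is $r^2$; and the proof's own conclusion is $2\Phi(-r)$ against the statement's $\Phi(-r)$. Moreover, an exact expansion of the signed distance $A=\langle\mu-m,w\rangle/\|w\|$ shows its fluctuation around $\|\mu\|/\sigma$ is of constant order (driven by $\langle\mu,G_0\rangle/\|\mu\|$ and by the $\|G_i\|^2$ contributions to $\|w\|$), so the constant in the statement cannot be attained exactly with probability $(1-e^{-t})^2$; some lower-order slack is unavoidable, and your proposed repair via a one-dimensional projection onto $\mu$ will reduce but not eliminate it. The substantive content of the theorem --- that a one-shot max-margin classifier needs class separation growing linearly in $d$, rather than the $\sqrt{d}$ that suffices with ample data --- is exactly what your argument establishes, under the stated hypothesis and with the stated probability. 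In short: same approach as the paper, executed more rigorously; the gap you flagged is real but is a defect of the theorem's bookkeeping that the paper's own proof shares in worse form.
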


\begin{proof}
Consider a (binary) Gaussian discriminant analysis model:
\[
\begin{aligned}
X\mid Y=1 \sim N(\mu, \sigma^2I_d), \\
X\mid Y=0 \sim N(-\mu,\sigma^2I_d).
\end{aligned}
\]
Define the core of the clusters as the sets 
\[
\begin{aligned}
\cC_1 \triangleq \{x\in\reals^d\mid\|x -\mu\|_2 \le r\sigma\}, \\
\cC_0 \triangleq \{x\in\reals^d\mid\|x + \mu\|_2 \le r\sigma\}.
\end{aligned}
\]
It is a tedious geometric exercise to show that as long as $\mu > 2r\sigma$, then the midpoint of any pair of points $(x_1,x_0)\in\cC_1\times\cC_0$ falls outside $\cC_1\cup\cC_0$. Further, the hyperplane 
\[\textstyle
\cH\triangleq\{x\in\reals^d\mid (x_1-x_0)^Tx = \frac12(\|x_1\|_2^2 - \|x_0\|_2^2)\}
\]
bisects $\cC_0$ and $\cC_1$ for any choice of $x_1,x_0$. This implies the risk of any max-margin classifier constructed from $x_1$ and $x_0$ has misclassification error rate at most $2\Phi(-r)$, where $\Phi$ is the $N(0,1)$ CDF. We note that the probability of a pair of independently sampled points $x_1\sim N(\mu,\sigma^2I_d)$ and $x_0\sim N(-\mu,\sigma^2I_d)$ falling in $\cC_1$ and $\cC_0$ respectively is $F_{\chi^2_d}(r)$, where $F_{\chi^2_d}$ is the CDF of a $\chi^2_d$ random scalar. By picking $r = d + \sqrt{2dt} + 2t$ for some $t>0$, the probability of $x_1\in\cC_1$ and $x_0\in\cC_0$ is at least $(1-e^{-t})^2$.
\end{proof}



\section{Conclusion}
\label{sec:conclusion}
We studied the problem of support data sensitivity in meta-learning: the performance of existing algorithms is extremely sensitive to the examples used for adaptation. Our findings suggest that when deploying meta-learning, especially in safety-critical applications such as autonomous driving or medical imaging, practitioners should carefully check the support examples they label for the meta-learner. However, even when the data is interpretable for a human, e.g. images, recognizing potentially detrimental examples could be hard as we have seen in our experiments.

In our experiments, we considered popular few-shot image classification benchmarks. We note that meta-learning has also been applied to data in other modalities such as language understanding \citep{dou2019investigating} and speech recognition \citep{hsu2020meta}. We expect our conclusions and Algorithm \ref{alg:attack} for finding the worst-case support data to apply in other meta-learning applications, however, an empirical study is needed to verify this.

Going forward, our results suggest that robustness in meta-learning could be achieved by explicitly encouraging separation and tighter intra-class embeddings (at least in the context of last-layer adaptation meta-learners). Unfortunately, the adversarial training approach, while successful in promoting robustness in many applications, fails to achieve robustness of meta-learners to support data. In our experiments, adversarial training achieved well-separated and tight intra-class embeddings resulting in robustness on the train tasks (i.e., tasks composed of classes seen during training), but failed to improve on the test tasks. Our findings demonstrate that new approaches are needed to achieve robustness in meta-learning.

Finally, we note that our results provide a new perspective on a different meta-learning phenomenon studied in prior work. \citet{setlur2020support} and \citet{ni2021data} studied the importance of the support data during training. \citet{setlur2020support} quantified the impact of the support data diversity, while \citet{ni2021data} considered the effectiveness of the support data augmentation, and both concluded that meta-learning is insensitive to the support data quality and diversity. In our experiments with a variation of adversarial training in Section \ref{sec:advtraining}, where support data during training is selected based on Algorithm \ref{alg:attack} to find worst-case support examples, we achieved significant improvements in terms of the worst-case accuracies on the train tasks. Thus, the conclusion is different from \citep{setlur2020support,ni2021data}, i.e. the support data used during training has an impact on the resulting meta-learner from the perspective of sensitivity studied in our work.

\begin{ack}
This note is based upon work supported by the National Science Foundation (NSF) under grants no.\ 1916271, 2027737, and 2113373. Any opinions, findings, and conclusions or recommendations expressed in this note are those of the authors and do not necessarily reflect the views of the NSF.
\end{ack}

\bibliographystyle{plainnat}
\bibliography{MY}


\clearpage

\appendix

\section{Experiment details}
\label{sec:appendix:experiment:details}

\paragraph{Network architectures:} For MAML and Meta-Curvature experiments, we use a 4-layer CNN network, where each convolutional block in the network is a sequential composition of a [2 $\times$ 2 max-pooling layer, batch normalization, and a 3 $\times$ 3 convolution layer]. The final classification layer of the network is fully-connected layer mapping the input features to 5-way output.

For ProtoNet and R2D2 experiments, we use the same architectures as are used in the original papers. The ProtoNet feature extractor is a combination of four convolutional blocks. Each block consists of a 64-filter 3 × 3 convolution, batch normalization layer, a ReLU nonlinearity, and a 2 × 2 max-pooling layer. The R2D2 feature extractor is a combination of 4 convolutional layers with [96, 192, 384, 512] filters. Each convolutional layer consists of a 3 × 3 convolution, batch normalization, 2 × 2 max pooling, and a leaky ReLU with a factor of 0.1

For MetaOptNet experiments, we use the same implementation and setting as described in the original paper. MetaOptNet networks consist of a ResNet-12 network as feature extractors, and either a support vector machine or ridge regression based head for classification.

\paragraph{Meta-learning setup:} MetaOptNet networks utilize SGD with Nesterov momentum of 0.9 and a weight decay of $5 \times 10^{-4}$ for optimization. The learning rate for this set of experiments was initially set to 1.0 and then modified to 0.06 for epochs 20 to 40, 0.012 for epochs 40 to 50, and 0.024 thereafter. MAML and Meta-Curvature networks are trained using Adam optimizer with an initial learning rate of $3 \times 10^{-4}$ and $0.01$ respectively.

All networks are trained for 60000 iterations -- 60 epochs of 1000 episodes each, with the batch sizes for MAML experiments set to 32 tasks in each batch, batch size for Meta-Curvature set to 16 tasks in each batch, and for MetaOptNet experiments it's set to 8 tasks in each batch.

During the meta-training phase, we apply the random crop, color jitter, and random horizontal flip transformations for MetaOptNet networks. Additionally, we match the meta-training shot with the meta-testing shot for all networks. While meta-training, we compute the accuracy on a 5-shot 5-way validation dataset, and select the model with the best accuracy on this validation dataset for sensitivity analysis.

During evaluation, i.e. results in Tables \ref{tab:accs:maml:cifar_fs}, \ref{tab:accs:maml:fc100}, and \ref{tab:accs:maml:miniimagenet}, to compute best and worst accuracies we randomly partition each class in each task into 400 potential adaptation examples composing $\cX$ and 200 evaluation examples composing $\cD$ (all datasets have 600 examples per class). We use the corresponding algorithm to find the adaptation examples in $\cX$ and report mean and standard deviations of evaluation data $\cD$ post-adaptation accuracies over 500 random tasks for 1-shot setting, and 100 random tasks for 5-shot and 10-shot settings. To compute average accuracies we follow the setup of previous meta-learning papers, i.e. sample corresponding number of adaptation examples randomly and choose a random subset of 50 examples per class for evaluation, and report mean and standard deviation of accuracies over 1000 random tasks.

\paragraph{Adversarial training setup:} To adversarially train the models as described in Section \ref{sec:advtraining}, we initialize with models trained in the standard fashion. We then train these models in an adversarial manner for 60 epochs of 1000 episodes each. We use the same hyperparameters and experimental setup as we did for the standard training, except that we reduce the learning rate by a factor of 10. Thus, the learning rate is initially set to 0.1 for the first 20 epochs, then modified to 0.006 for epochs 20 to 40, 0.0012 for epochs 40 to 50, and 0.0024 thereafter. To find the adversarial examples, we find the worst-case examples using algorithm \ref{alg:attack} run for 3 iterations. We then use these worst-case examples as the query data to update the model parameters.

\paragraph{Algorithm runtimes:} We run all our experiments on a 12 CPU core, 32 GB RAM, and 1 V100 GPU machine. The run times for a single iteration (in our experiments we ran 3 iterations to find the worst/best case examples) for MetaOptNet-SVM method on CIFAR-FS are approximately 3 minutes for 1-shot setting, approximately 18 minutes for 5-shot setting, and approximately 43 minutes for 10-shot setting. The corresponding run times for a single iteration for R2D2 method on FC100 are approximately 1 minute for 1-shot setting, approximately 6 minutes for 5-shot setting, and approximately 20 minutes for 10-shot setting.

\section{Convergence of the algorithm to find adaptation vulnerabilities}
\label{sec:appendix:convergence}

To find the worst-case support examples (Section \ref{sec:adaptation}) and to find the adversarial examples for adversarial training (Section \ref{sec:advtraining}), we use Algorithm \ref{alg:attack} with 3 rounds of attack or iterations. In this section, we show the convergence of Algorithm \ref{alg:attack} and the rationale behind choosing 3 iterations. We execute algorithm \ref{alg:attack} to find the worst-case 5-way 10-shot support examples on CIFAR-FS and FC100 datasets for R2D2, ResNet-Ridge, and the ResNet-SVM algorithms. We track the worst-case accuracy as it updates through 10 iterations, and show the mean worst-case accuracy through the iterations averaged over 5 different randomly-sampled tasks in figure \ref{fig:appendix:convergence}. We see that while the worst-case accuracy drops significantly in the first few iterations, it stabilizes after 3 iterations and does not show significant change after the 3 iterations for all datasets and algorithms. Additionally, running for longer iterations can reduce accuracy slightly more for the 10-shot setting as compared to the 1-shot and 5-shot settings; however, the 10-shot setting is more robust to this algorithm than the 5-shot setting and thus the drop in accuracy in later iterations is understandable.

\begin{figure}[h]
  \centering
  
  \subfigure[Worst-case accuracy over 10 iterations of algorithm \ref{alg:attack} for different algorithms and on the CIFAR-FS dataset. ]{\includegraphics[width=0.4\textwidth]{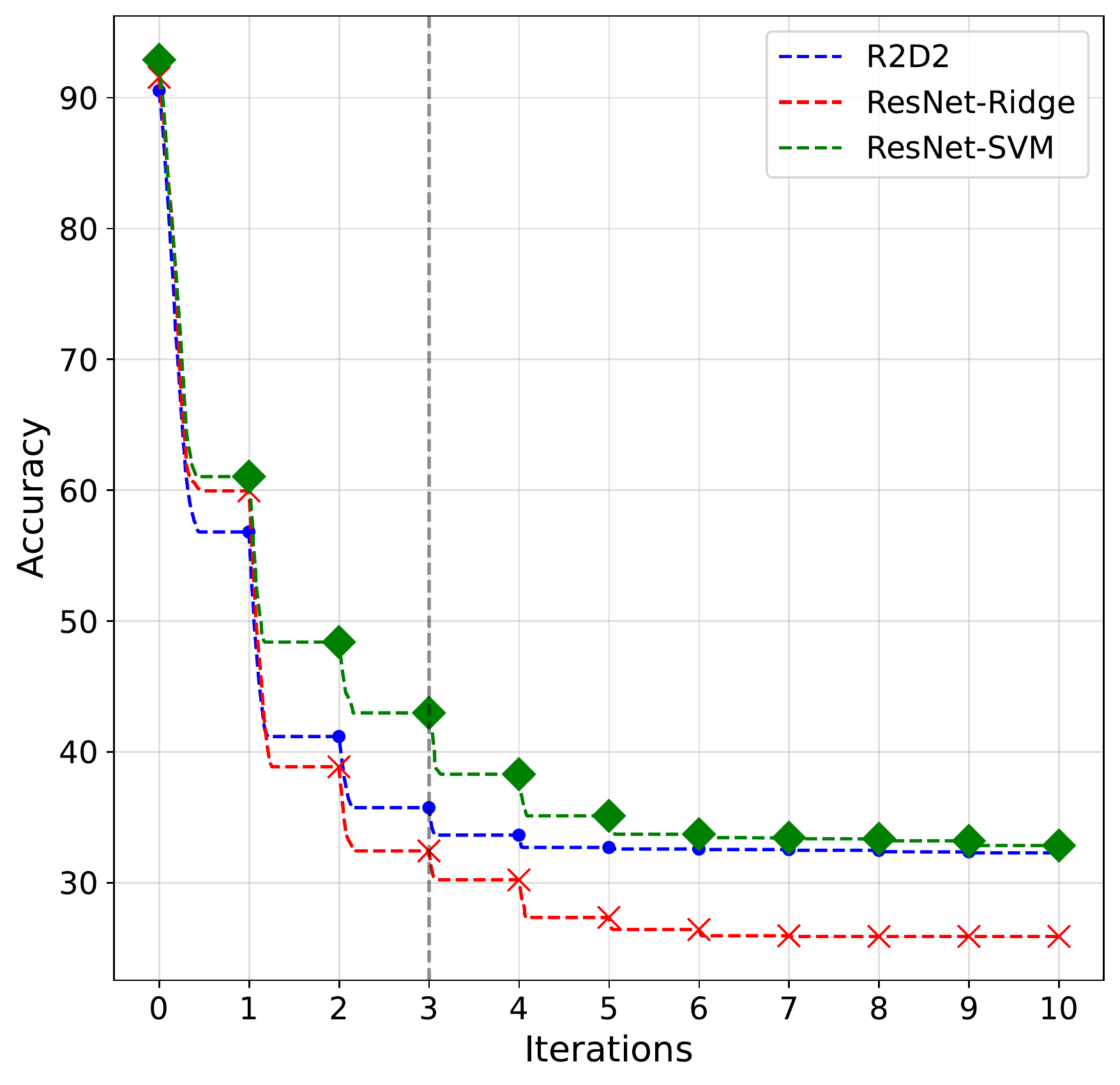}} 
  \hfill
  \subfigure[Worst-case accuracy over 10 iterations of Algorithm \ref{alg:attack} for different algorithms and on the FC100 dataset. ]{\includegraphics[width=0.4\textwidth]{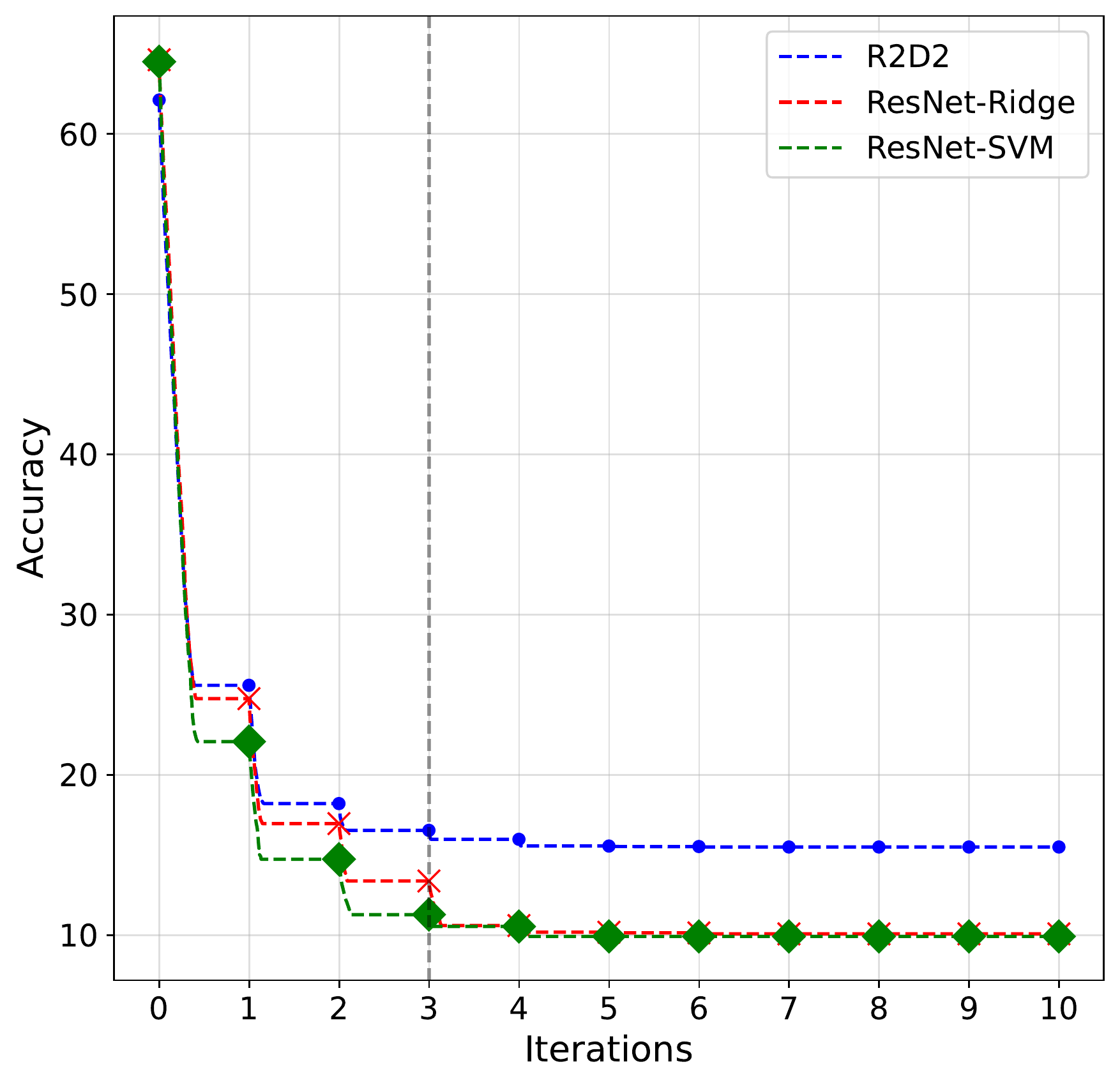}}
  \hfill
  
  \caption{Convergence plots for Algorithm \ref{alg:attack} for different meta-learning algorithms on the CIFAR-FS and FC100 datasets.}
  \label{fig:appendix:convergence}
\end{figure}

\section{Performance range results}
\label{sec:appendix:worstcase_examples}

\begin{figure}[t]
  \centering
  
  \subfigure[Examples of FC100 adaptation images from a random 1-shot test task and the corresponding post-adaptation accuracies of MetaOptNet-SVM. Accuracy ranges from 4.0\% to 60.8\% illustrating the sensitivity of the meta-learner to the adaptation data.]{\includegraphics[width=0.45\textwidth]{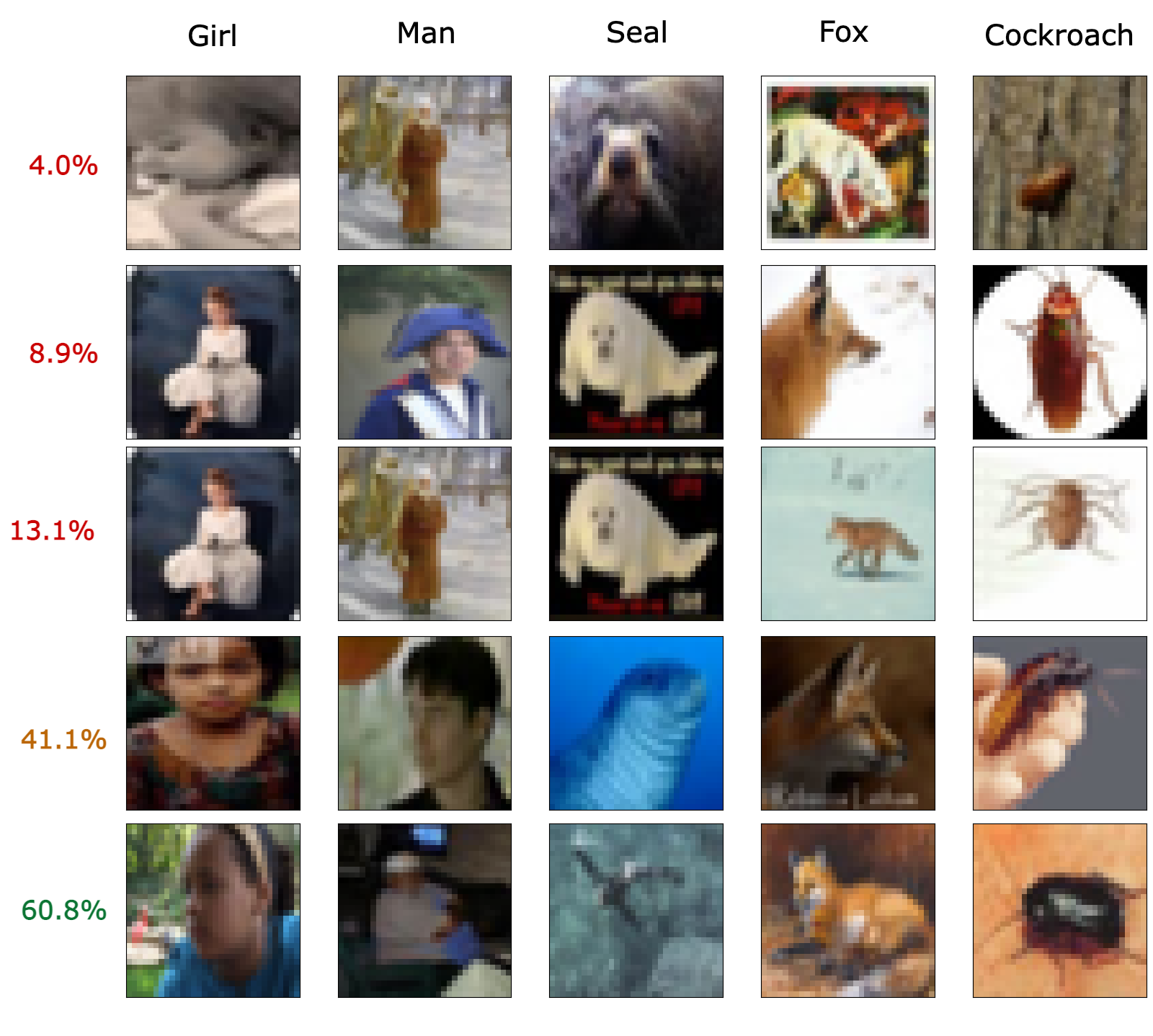}} 
  \hfill
  \subfigure[Examples of MiniImageNet adaptation images from a random 1-shot test task and the corresponding post-adaptation accuracies of MetaOptNet-SVM. Accuracy ranges from 5.8\% to 75.9\% illustrating the sensitivity of the meta-learner to the adaptation data.]{\includegraphics[width=0.45\textwidth]{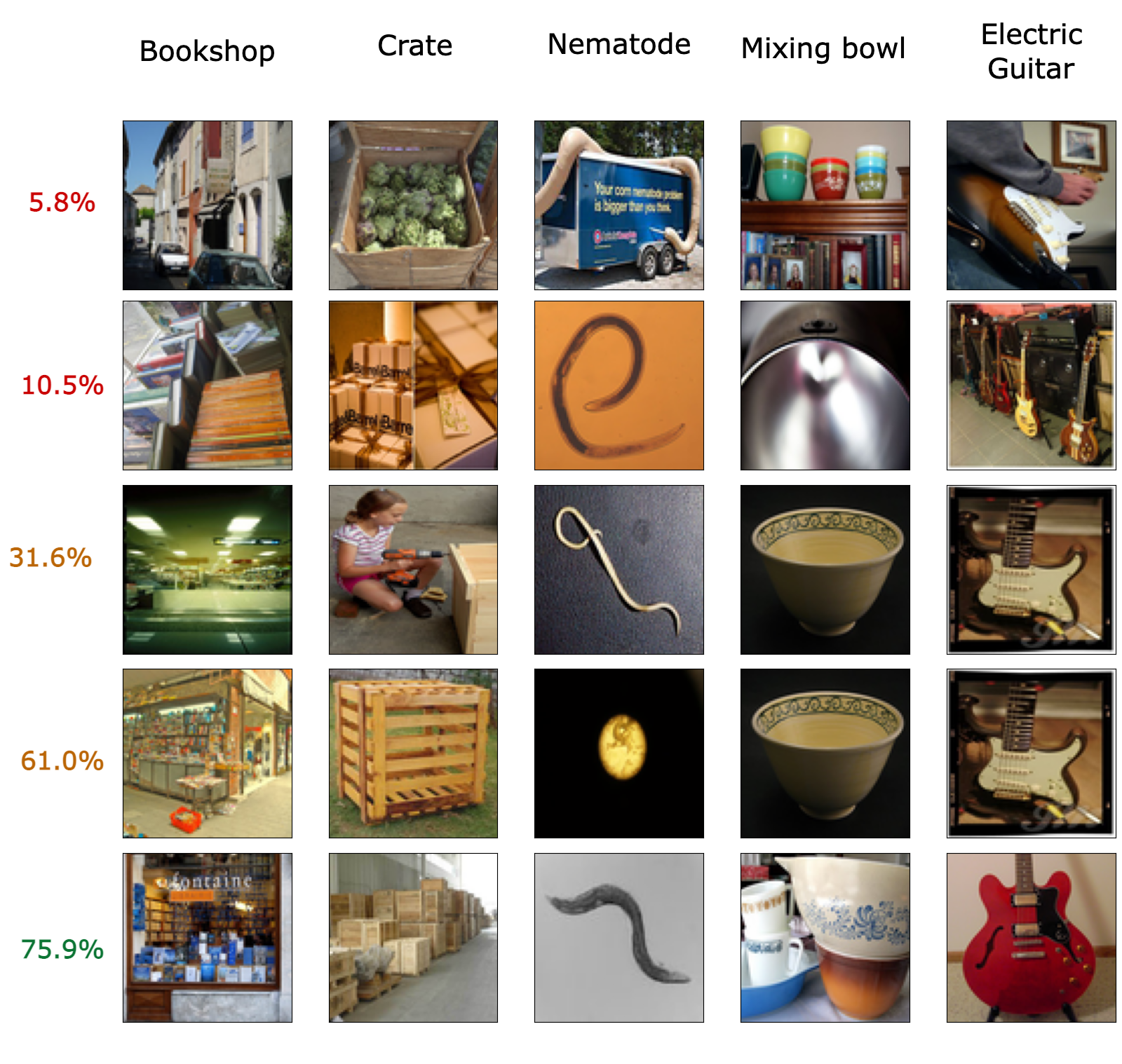}}\hfill
  
  \caption{Examples of unaltered support images from the FC100 and miniImageNet datasets for a random 1-shot task, depicting the post-adaptation performance of a popular meta-learning algorithm (MetaOptNet-SVM).}
  \label{fig:appendix:exemplars}
\end{figure}

\begin{figure}[]
  \centering
  
  \subfigure[Examples of CIFAR-FS adaptation images from a random 1-shot test task and the corresponding post-adaptation accuracies of ProtoNet.]{\includegraphics[width=0.32\textwidth]{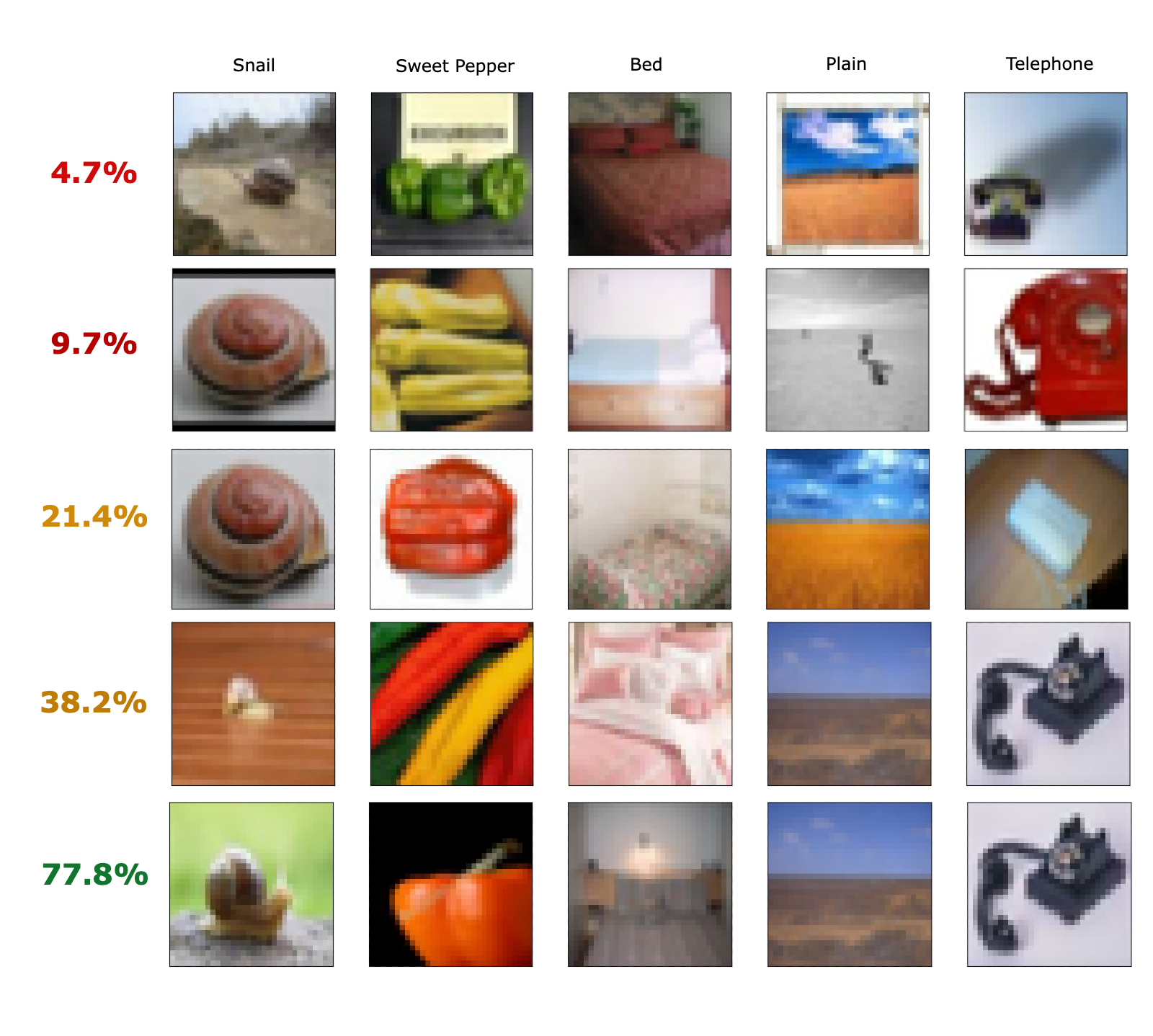}} 
  \hfill
  \subfigure[Examples of FC100 adaptation images from a random 1-shot test task and the corresponding post-adaptation accuracies of ProtoNet.]{\includegraphics[width=0.33\textwidth]{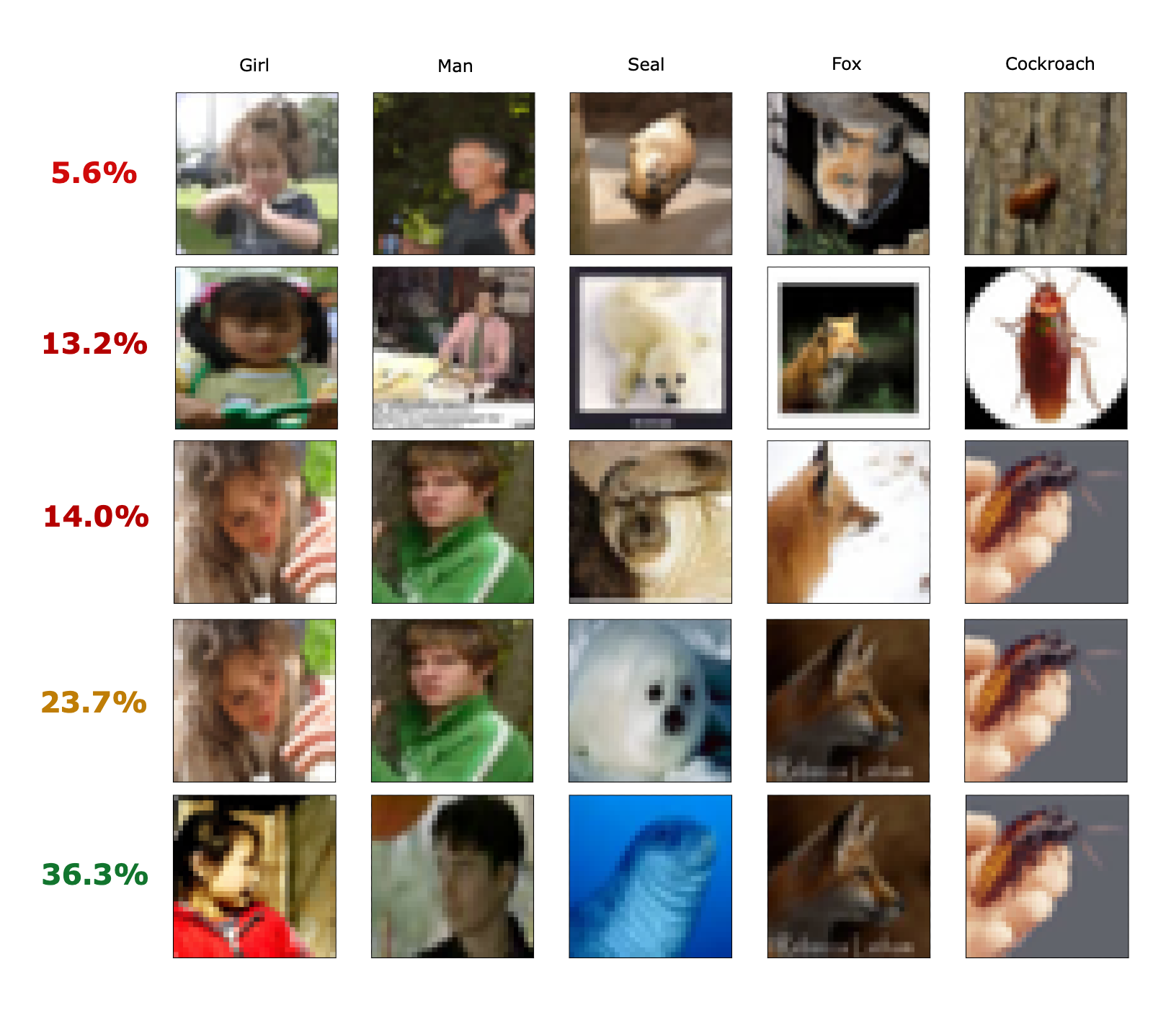}} 
  \hfill
  \subfigure[Examples of MiniImageNet adaptation images from a random 1-shot test task and the corresponding post-adaptation accuracies of ProtoNet.]{\includegraphics[width=0.33\textwidth]{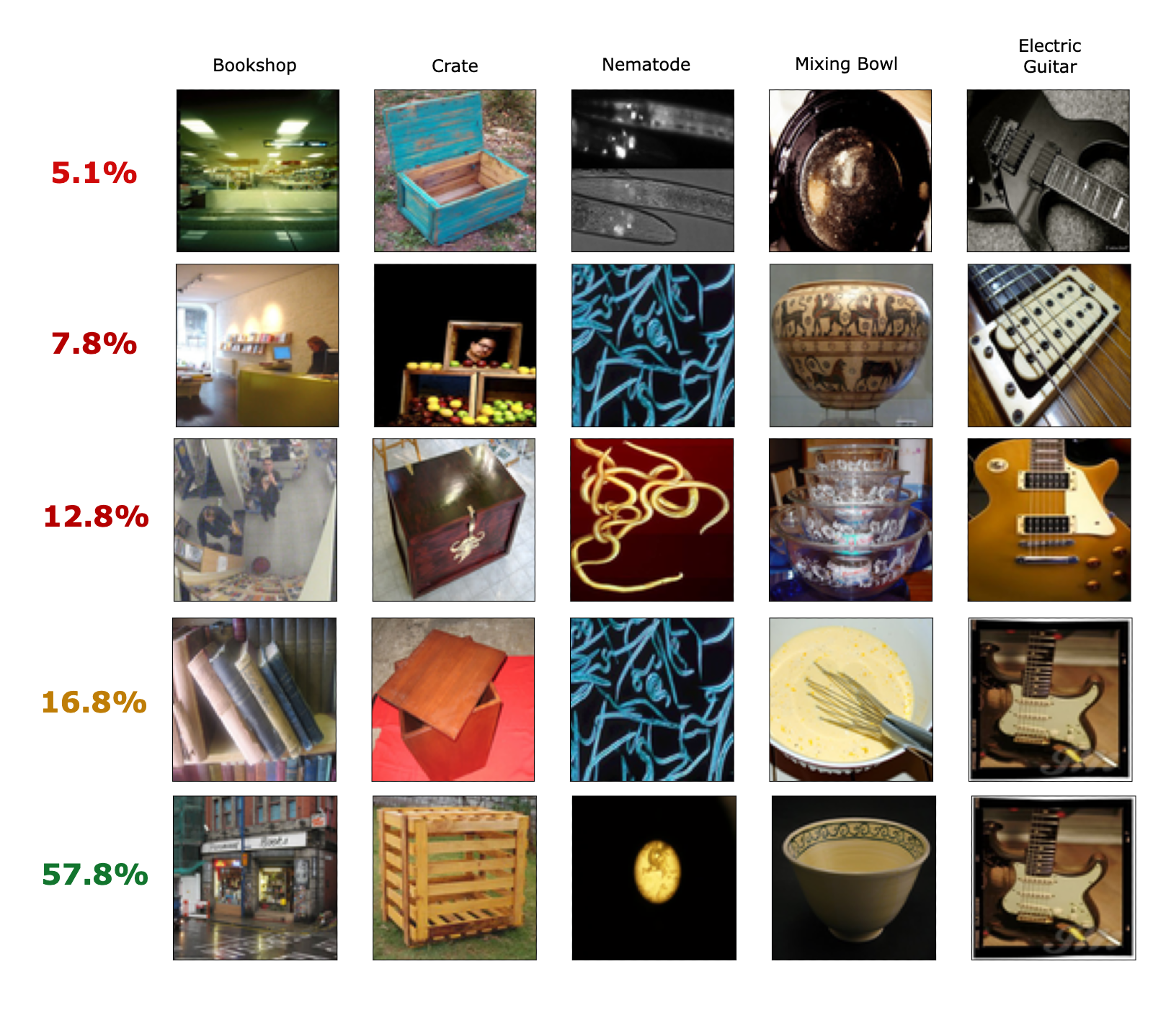}}\hfill
  
  \caption{Examples of unaltered support images from the CIFAR-FS, FC100, and miniImageNet datasets for a random 1-shot task, depicting the post-adaptation performance of a popular meta-learning algorithm (ProtoNets).}
  \label{fig:appendix:protonet:exemplars}
\end{figure}

\begin{enumerate}

    \item In Figure \ref{fig:appendix:exemplars} we present several examples of the worst-case support images on the FC100 and the miniImageNet datasets for the MetaOptNet-SVM method. Additionally, in Figure \ref{fig:appendix:protonet:exemplars} we present several examples of the worst-case support images on the CIFAR-FS, FC100, and the miniImageNet datasets for the ProtoNets method.
    All the support images are correctly labeled and appear representative of the respective classes. Closely inspecting the images, we note that it is often not easy to notice visually that such support examples could result in a poor performance without significant expert knowledge of the dataset. Barring a very small portion of the images (e.g., gray-scale ``Girl'' image and a truck-size ``Nematode''), images appear reasonable and adequately representative of their respective classes.
    
    \item In Figure \ref{fig:appendix:attackprog} we present histograms of accuracies visualizing the first iteration over classes of Algorithm \ref{alg:attack} in 1-shot learning on miniImageNet dataset for MAML, R2D2, MetaOptNet-Ridge, and the MetaOptNet-SVM algorithms. The rightmost histogram (in each sub-figure) corresponds to post-adaptation accuracies for different choices of support image for class 0 and random choices for classes 1-4. The subsequent histogram (in each sub-figure) is for different choices of support images for class 1, where image for class 0 is chosen with Algorithm \ref{alg:attack} and classes 2-4 are random, and analogously for the remaining three histograms. We see that the lower accuracy tails of the first two histograms contain multiple worst-case support examples in the corresponding classes. By the third histogram, the range of accuracies is well below the average accuracies for each of the algorithms for \emph{all} possible support examples in the corresponding classes.
    
    \item In Figure \ref{fig:appendix:accshist} we present histogram of accuracies of unique combinations from the miniImageNet dataset of 1-shot support examples evaluated by Algorithm \ref{alg:attack} throughout 3 iterations. 
    
    \begin{enumerate}
        \item For the MAML algorithm, out of the 4390 distinct sets represented in the histogram, 3054 distinct sets of 5 examples each have less than 30\% post-adaptation accuracy.
        \item For the R2D2 algorithm, out of the 5986 distinct sets represented in the histogram, 5274 distinct sets of 5 examples each have less than 40\% post-adaptation accuracy.
        \item For the MetaOptNet-Ridge algorithm, out of the 3592 distinct sets represented in the histogram, 2795 distinct sets of 5 examples each have less than 40\% post-adaptation accuracy.
        \item For the MetaOptNet-SVM algorithm, out of the 5188 distinct sets represented in the histogram, 4403 distinct sets of 5 examples each have less than 40\% post-adaptation accuracy.
    \end{enumerate}
\end{enumerate}

\begin{figure}[t]
  \centering
  
  \subfigure[MAML]{\includegraphics[width=0.24\textwidth]{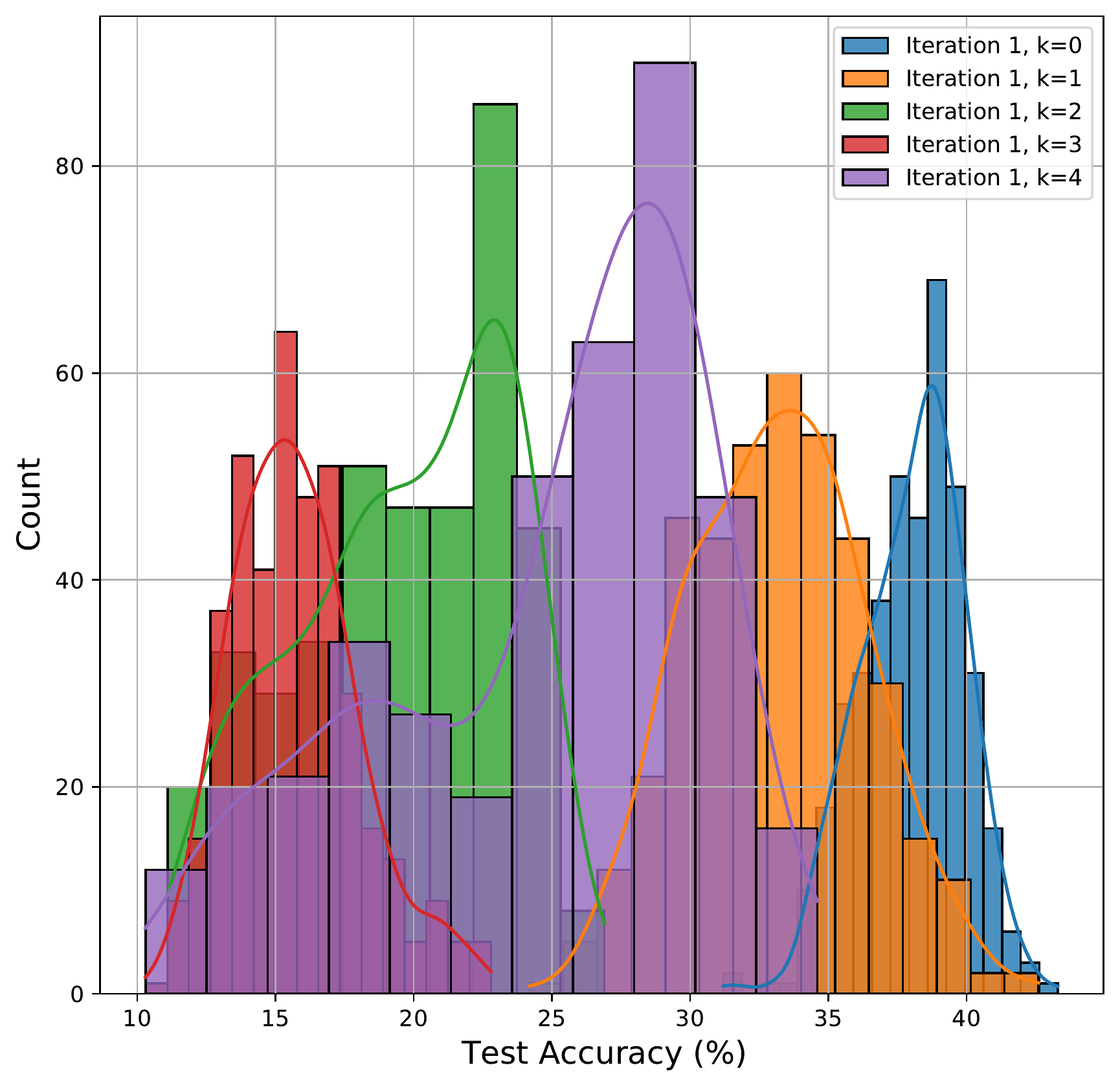}} \hfill
  \subfigure[R2D2]{\includegraphics[width=0.24\textwidth]{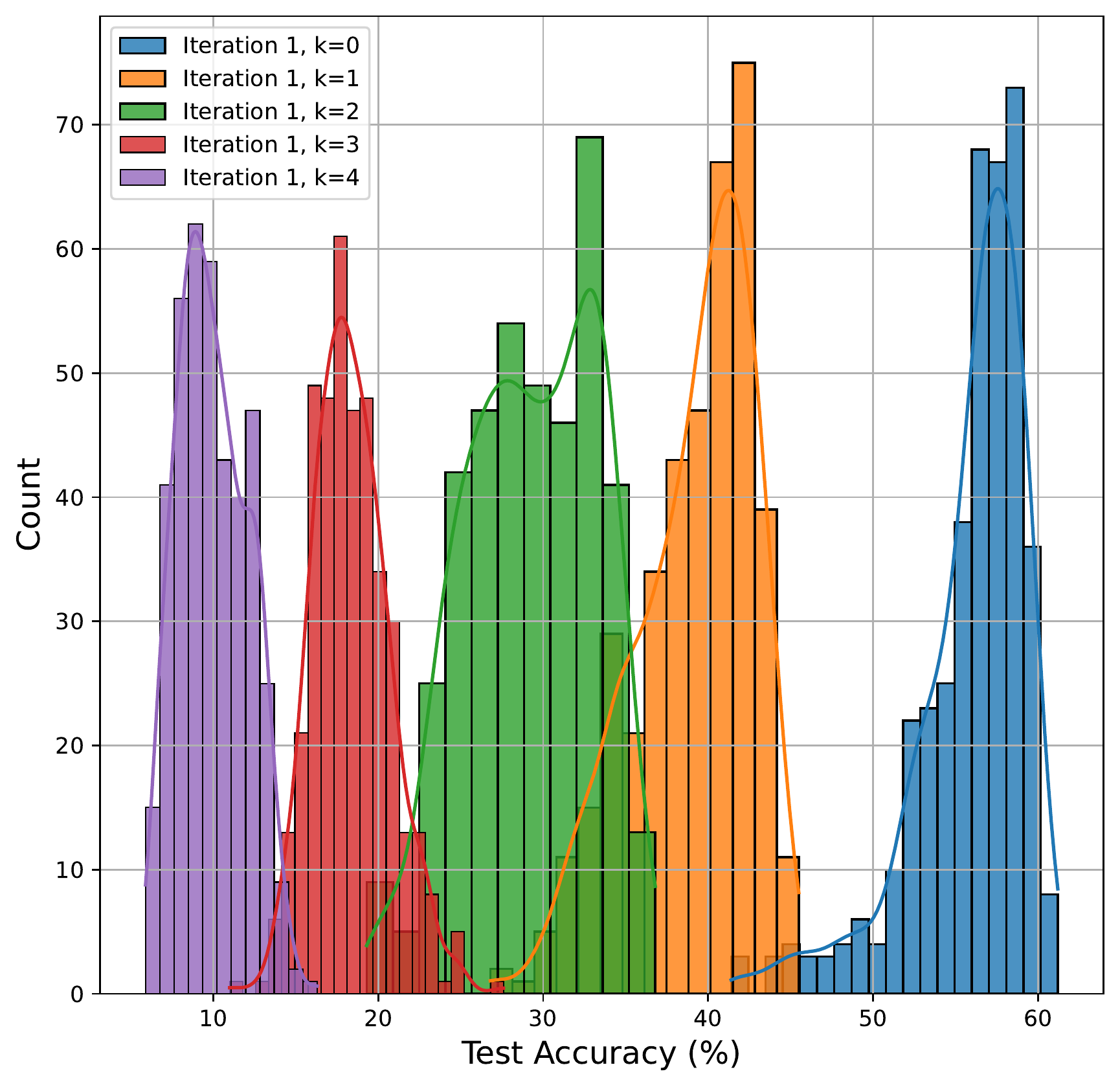}}\hfill
  \subfigure[MetaOptNet-Ridge]{\includegraphics[width=0.24\textwidth]{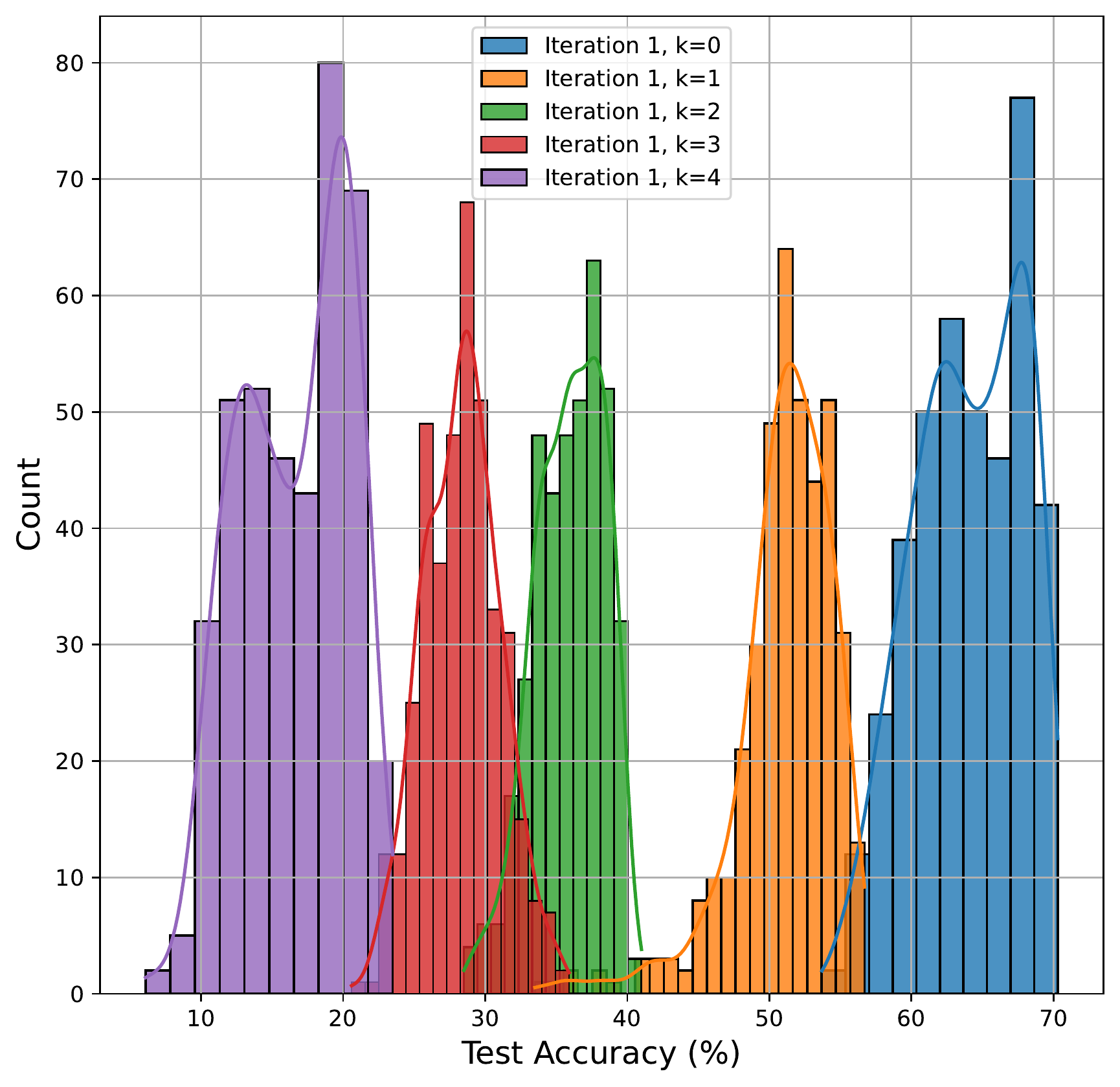}} \hfill
  \subfigure[MetaOptNet-SVM]{\includegraphics[width=0.24\textwidth]{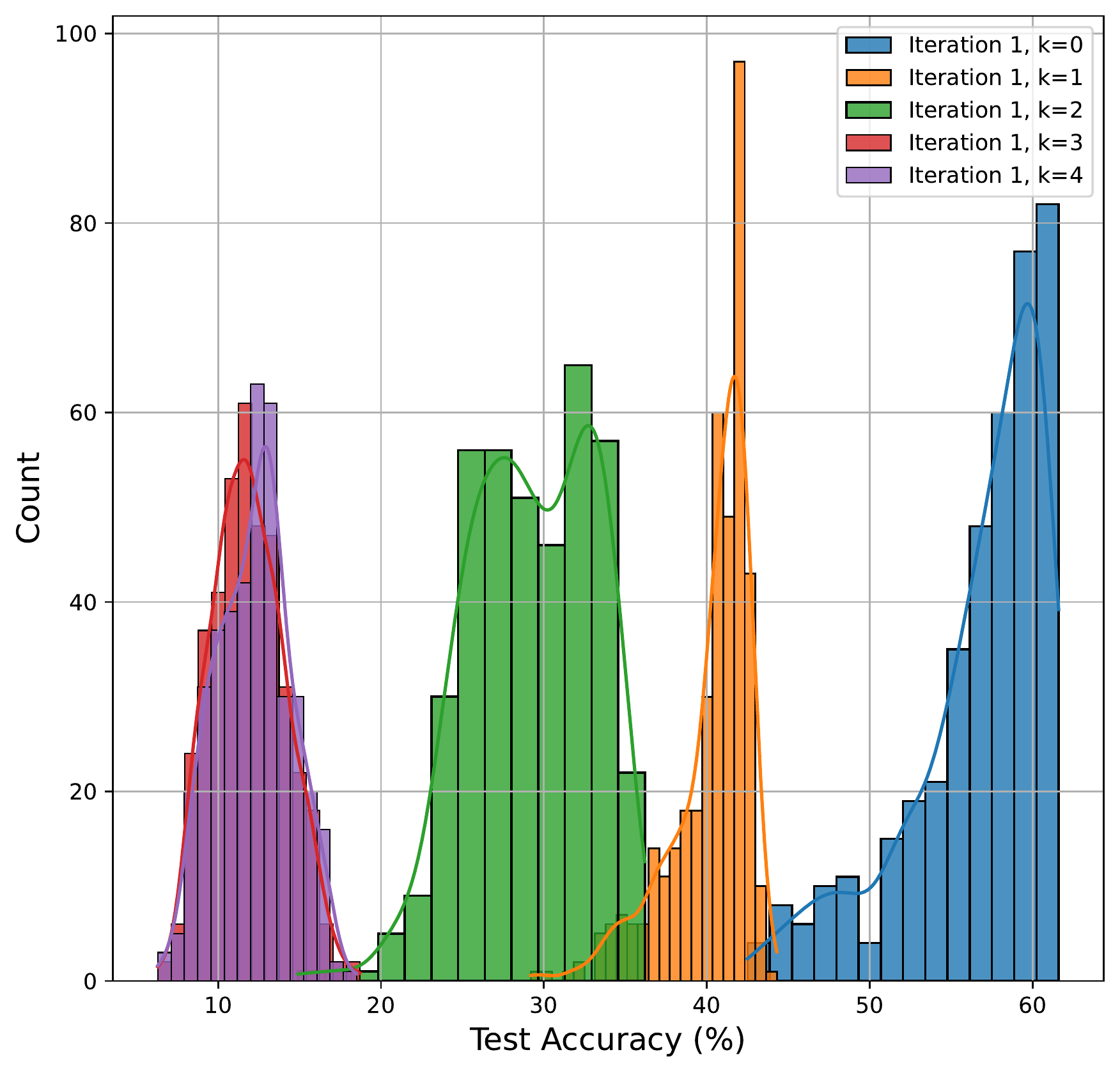}}\hfill
  
  \caption{Histogram of accuracies visualizing progression of the first iteration of Algorithm \ref{alg:attack} in 1-shot learning over the miniImageNet dataset for different meta-learning algorithms.}
  \label{fig:appendix:attackprog}
\end{figure}

\begin{figure}[]
  \centering
  
  \subfigure[MAML]{\includegraphics[width=0.24\textwidth]{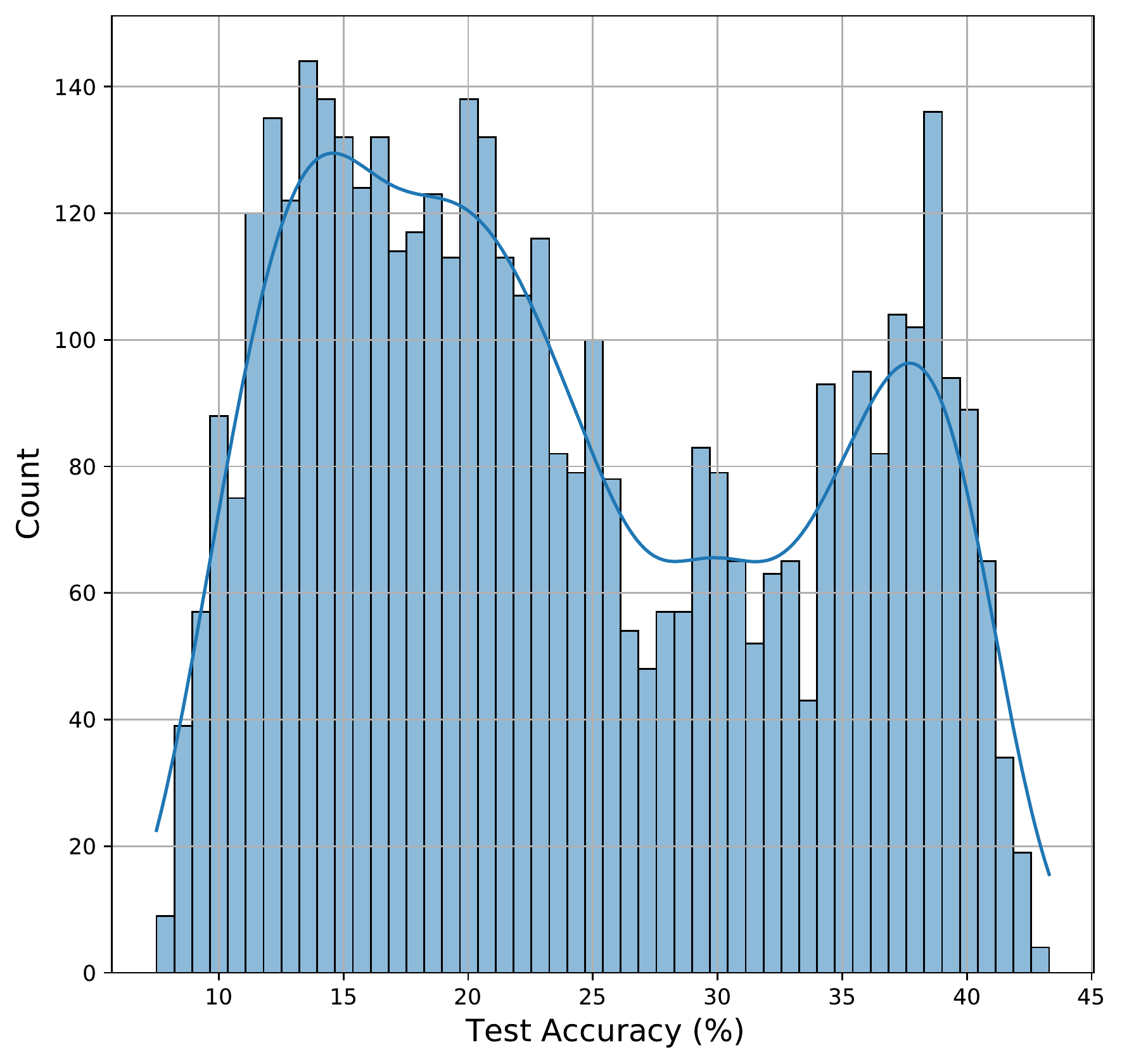}} \hfill
  \subfigure[R2D2]{\includegraphics[width=0.24\textwidth]{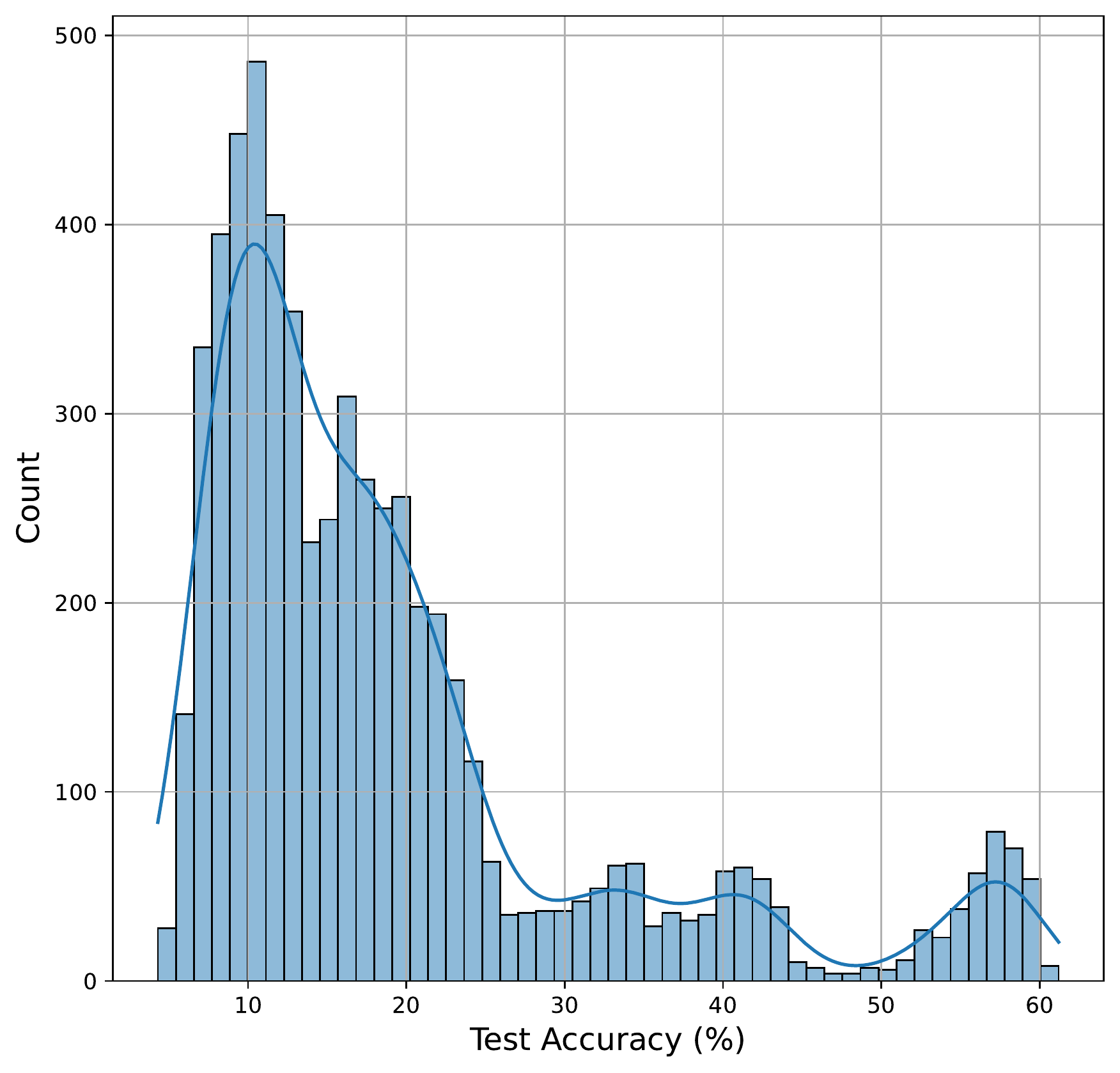}}\hfill
  \subfigure[MetaOptNet-Ridge]{\includegraphics[width=0.24\textwidth]{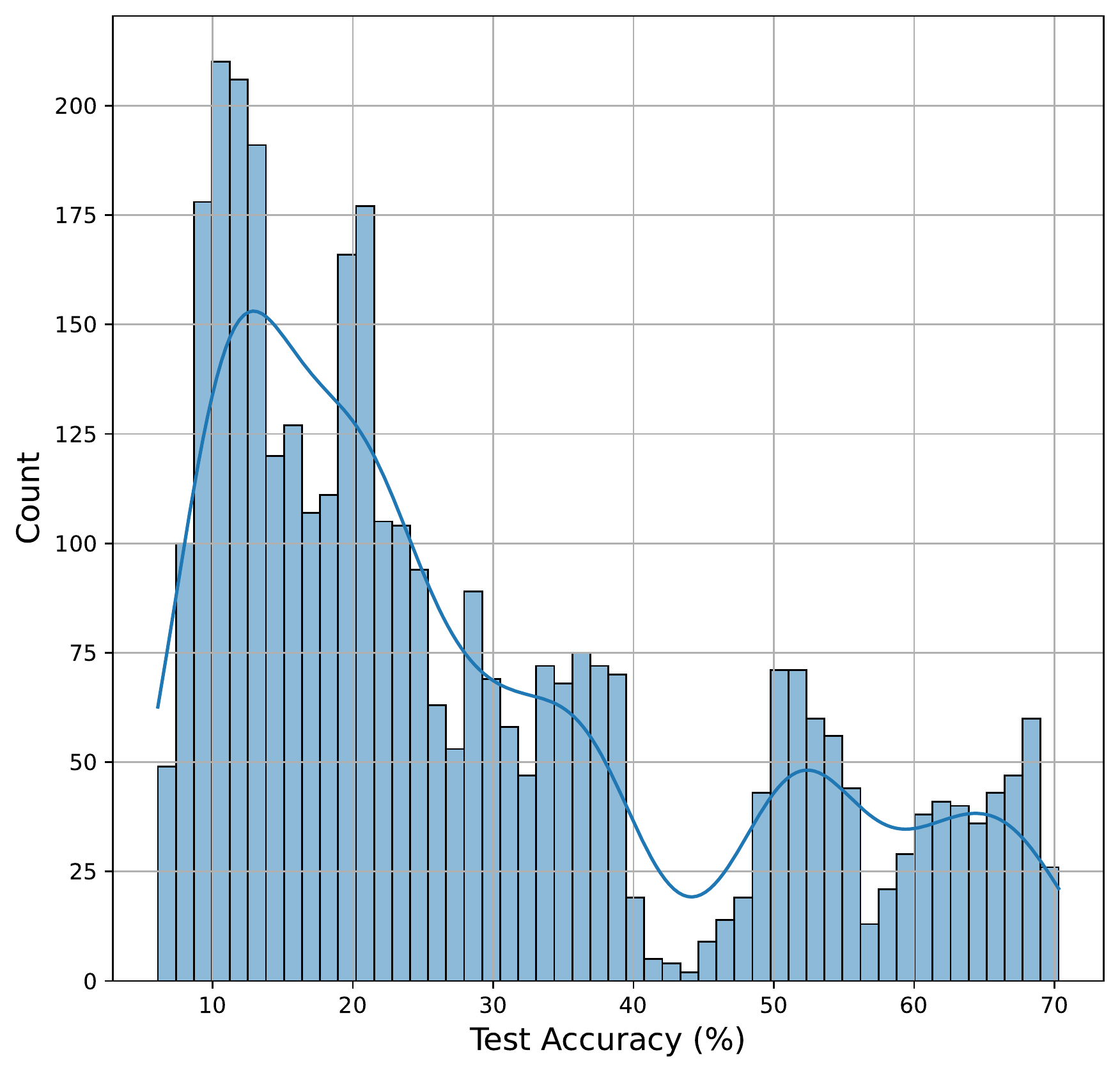}} \hfill
  \subfigure[MetaOptNet-SVM]{\includegraphics[width=0.24\textwidth]{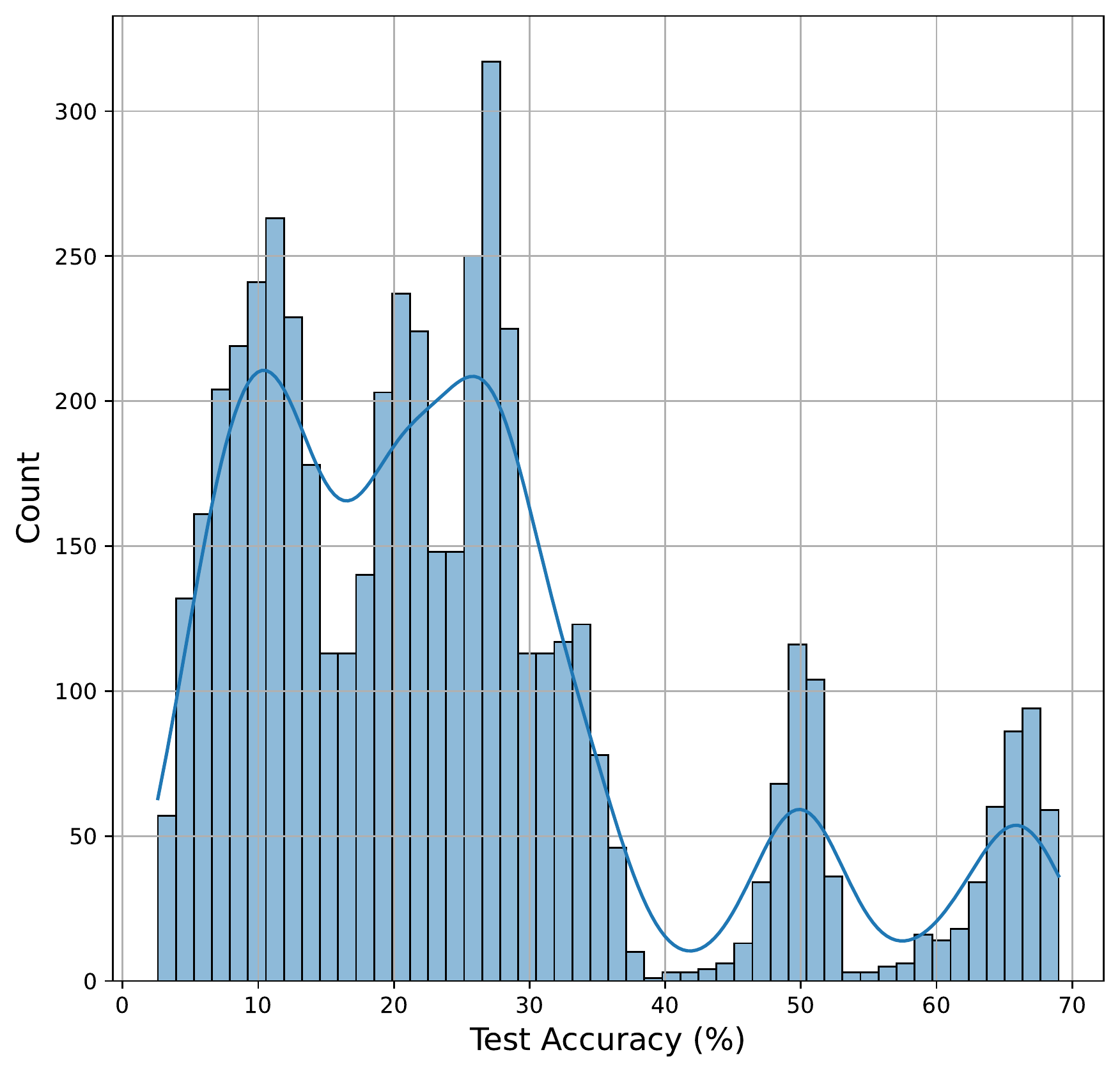}}\hfill
  
  \caption{Histogram of accuracies of unique combinations from the miniImageNet dataset of 1-shot support examples evaluated by Algorithm \ref{alg:attack} for 3 iterations,}
  \label{fig:appendix:accshist}
\end{figure}

\section{Improving support data robustness with adversarial training}
\label{sec:appendix:embeddings}

In Table \ref{tab:appendix:embeddings}, we show the projected embeddings for the R2D2, MetaOptNet-Ridge, and the MetaOptNet-SVM algorithms on the training and the test dataset, when trained in a standard manner vs when trained adversarially. As we note in Section \ref{sec:advtraining} and as results depict in Tables  \ref{tab:cifarfs-adversarial} and \ref{tab:fc100-adversarial}, the adversarial training converges and the worst-case accuracy improves drastically on the training tasks while no improvement is observed on the test tasks.

\begin{table}[h]
    \centering
    
    \caption{Projected embeddings of R2D2, MetaOptNet-Ridge, and MetaOptNet-SVM methods for a train and a test task query data from the FC100 dataset. We compare standard training and adversarial training discussed in Section \ref{sec:advtraining}. Points are colored with their labels. Highlighted points are the worst-case support examples selected with Algorithm \ref{alg:attack}.}
    \label{tab:appendix:embeddings}
    
    \begin{tabular}{|c|cccc|}
    \toprule
        Training ($\downarrow$)        &      Dataset ($\downarrow$)     &       R2D2        &       MetaOptNet-Ridge        &       MetaOptNet-SVM      \\
    \toprule
    
    \multirow{2}{*}{Standard}    &   Train  &
                                \includegraphics[width=0.2\textwidth]{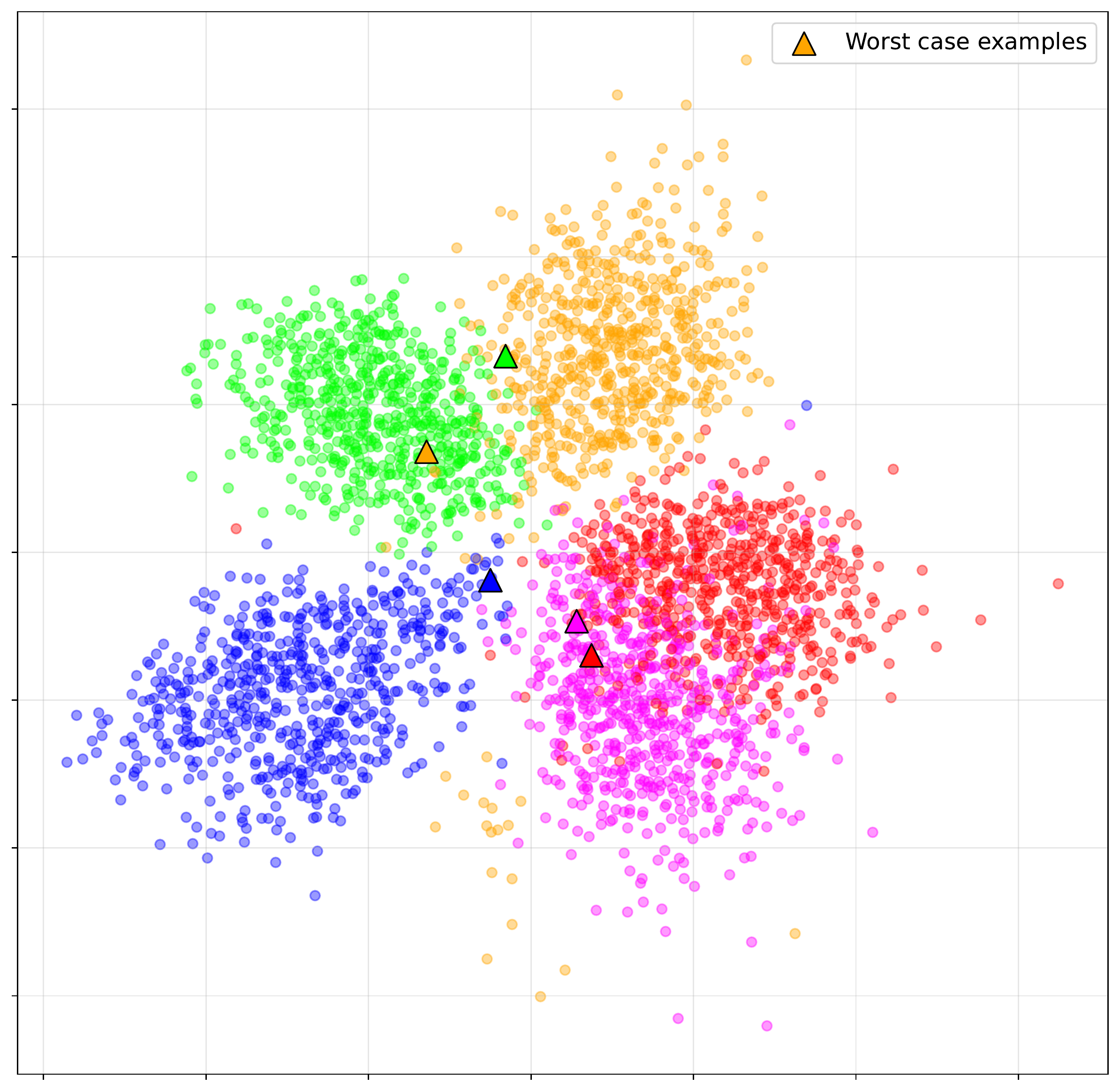}
                                            &       
                                \includegraphics[width=0.2\textwidth]{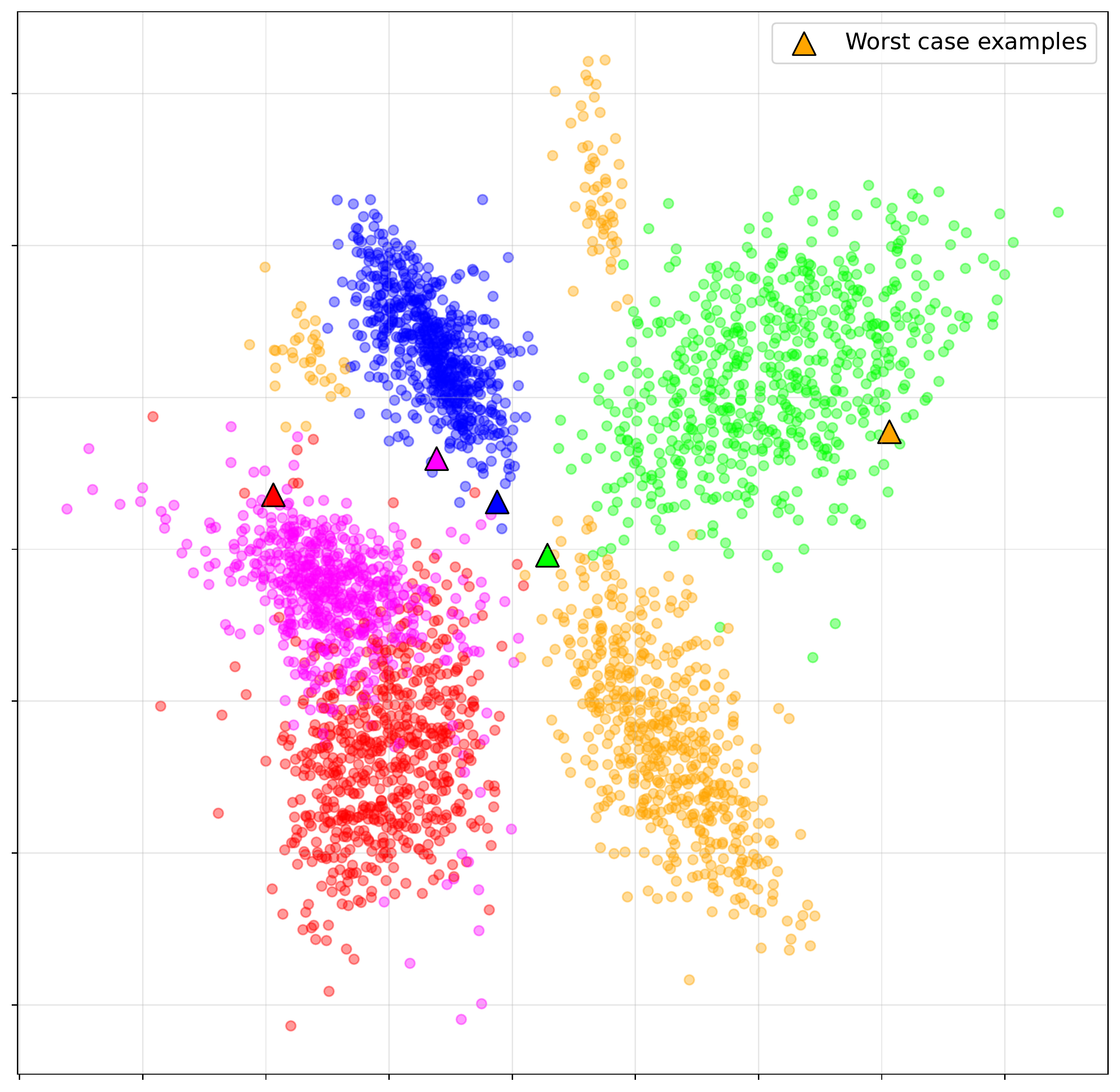}
                                            &   
                                \includegraphics[width=0.2\textwidth]{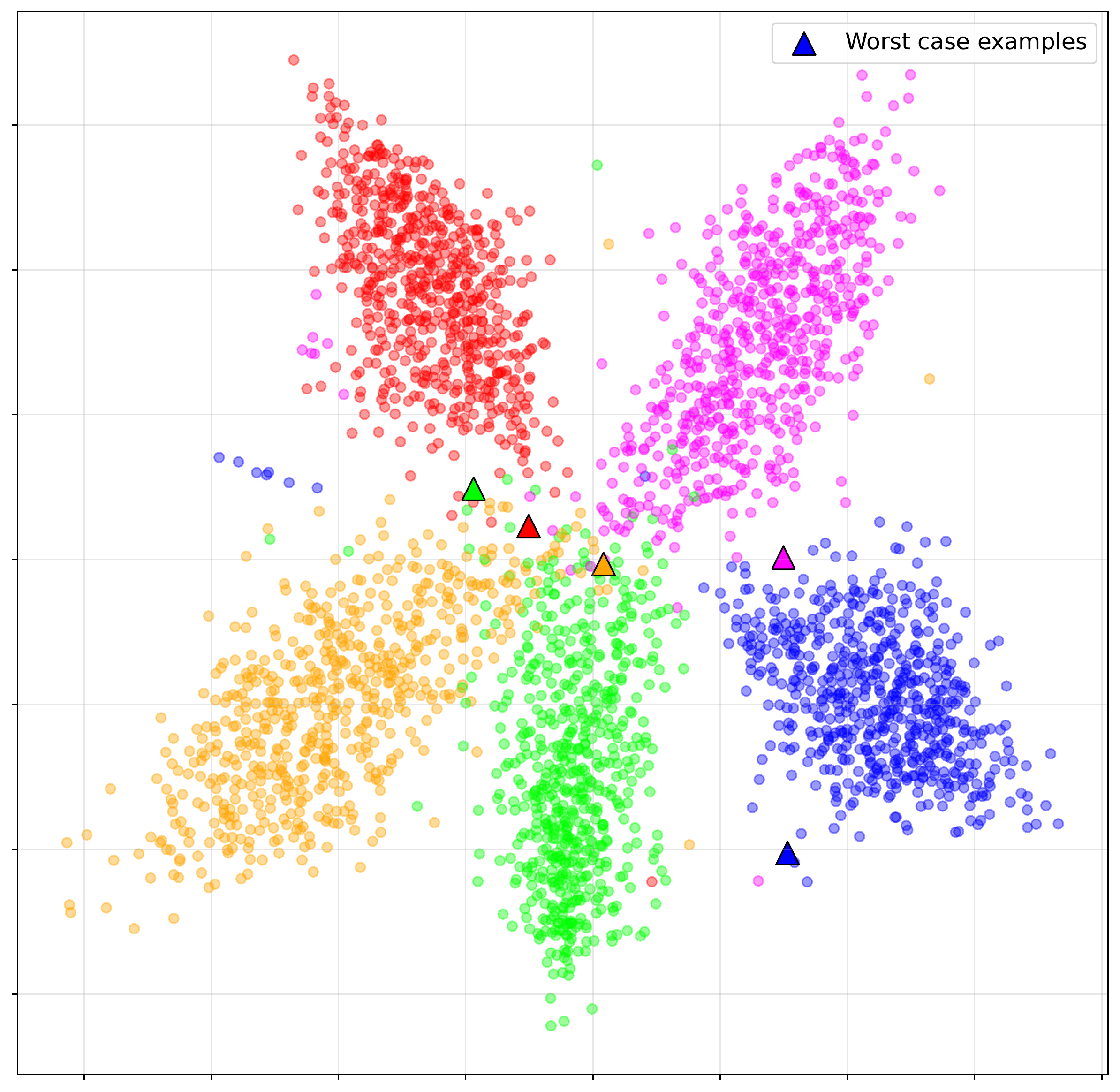}
                                \\
                    \cmidrule(r){2-5}
                                &   Test    &       
                                \includegraphics[width=0.2\textwidth]{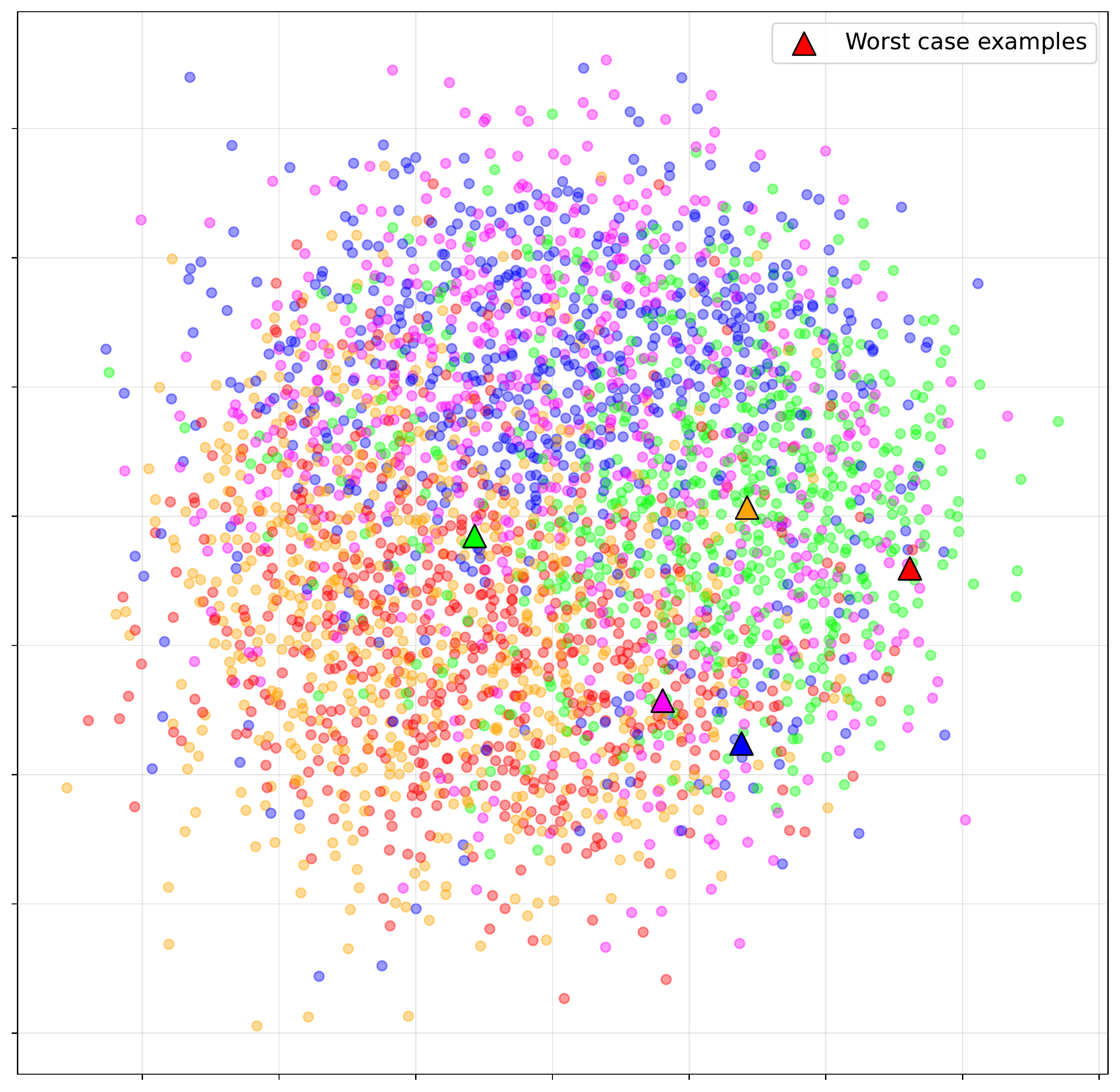}
                                            &       
                                \includegraphics[width=0.2\textwidth]{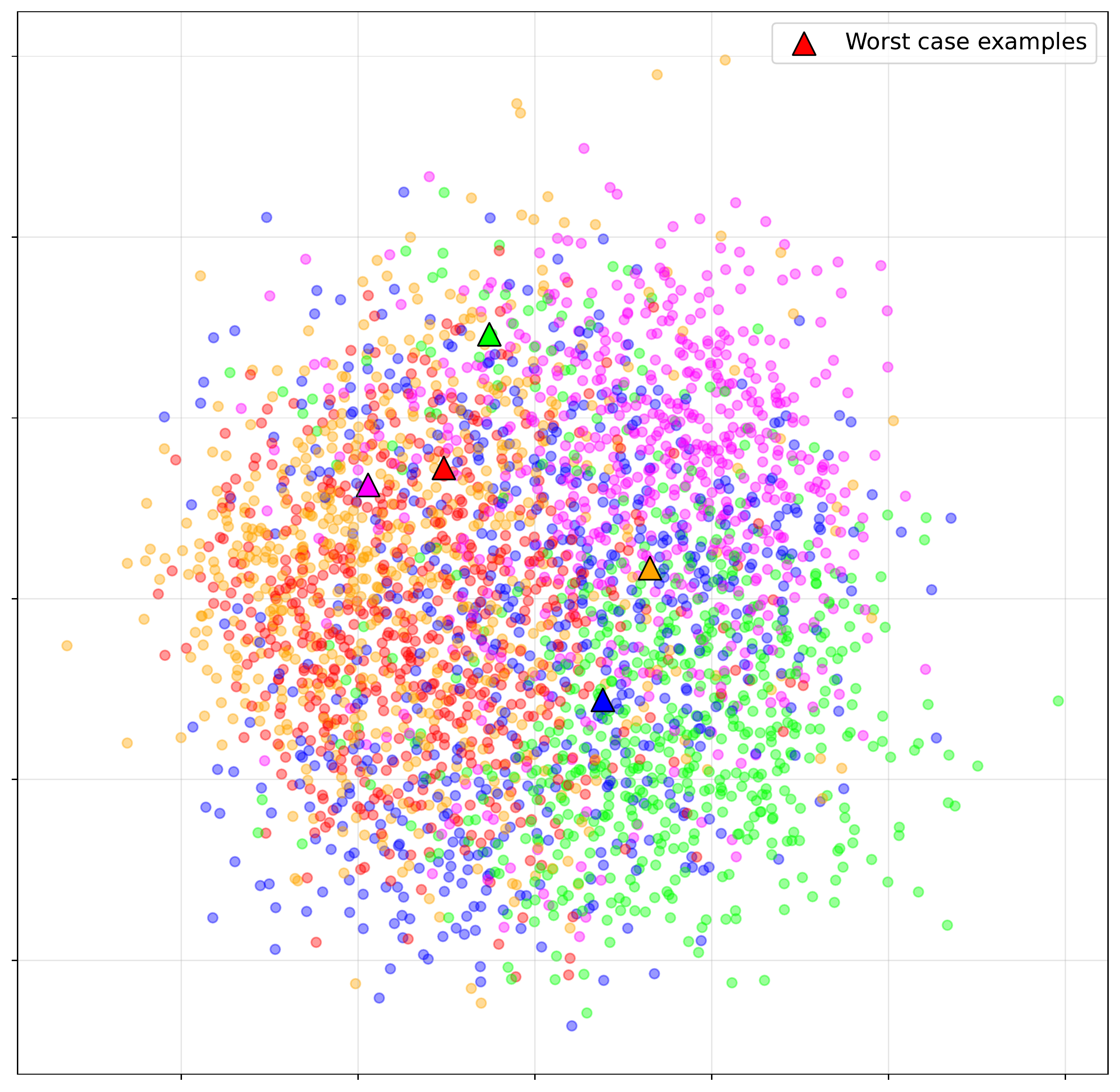}
                                            &           
                                \includegraphics[width=0.2\textwidth]{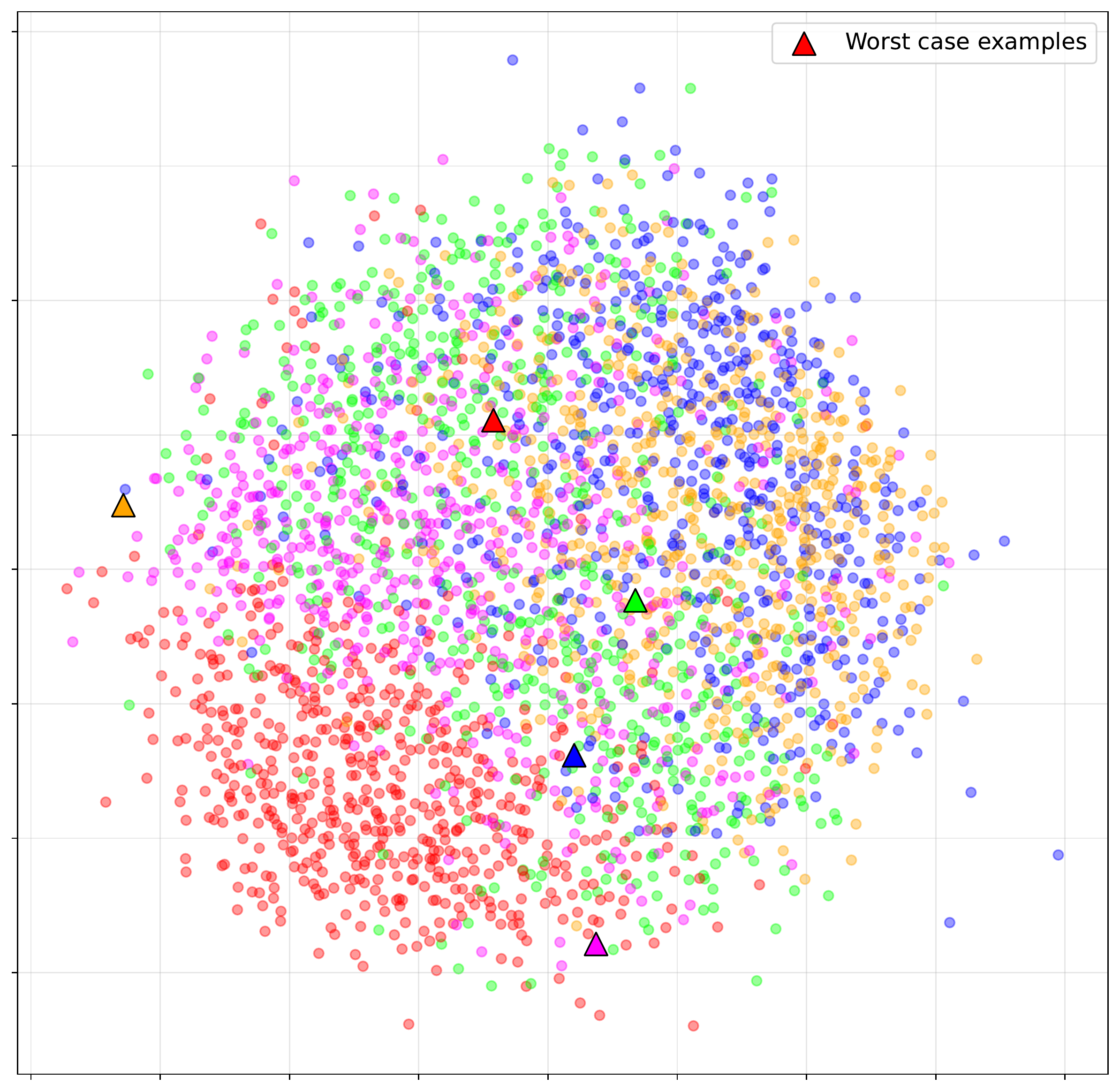}
                                \\
                                        
    \midrule         
    \multirow{2}{*}{Adversarial}    &   Train   &
                                \includegraphics[width=0.2\textwidth]{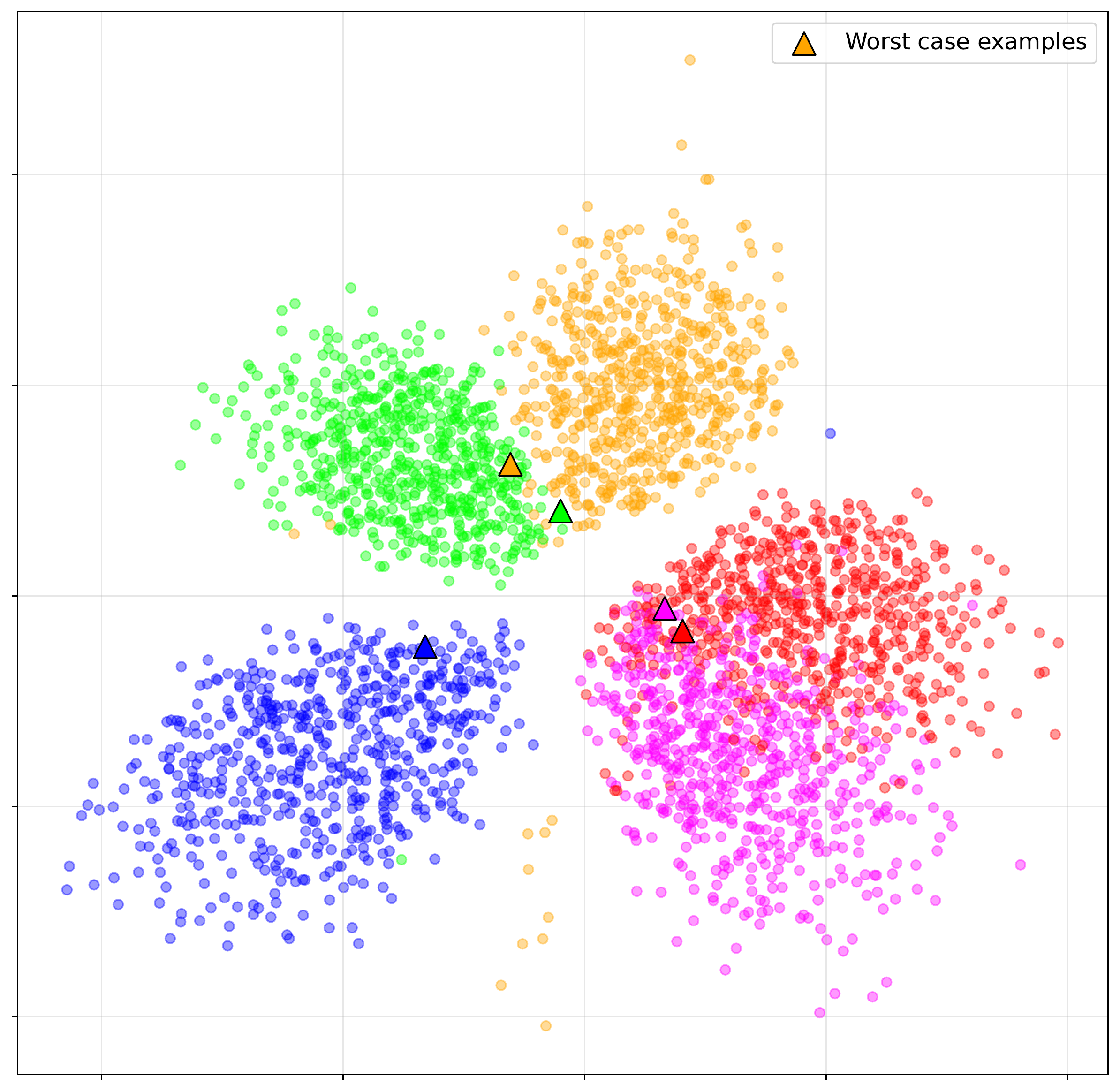}
                                            &       
                                \includegraphics[width=0.2\textwidth]{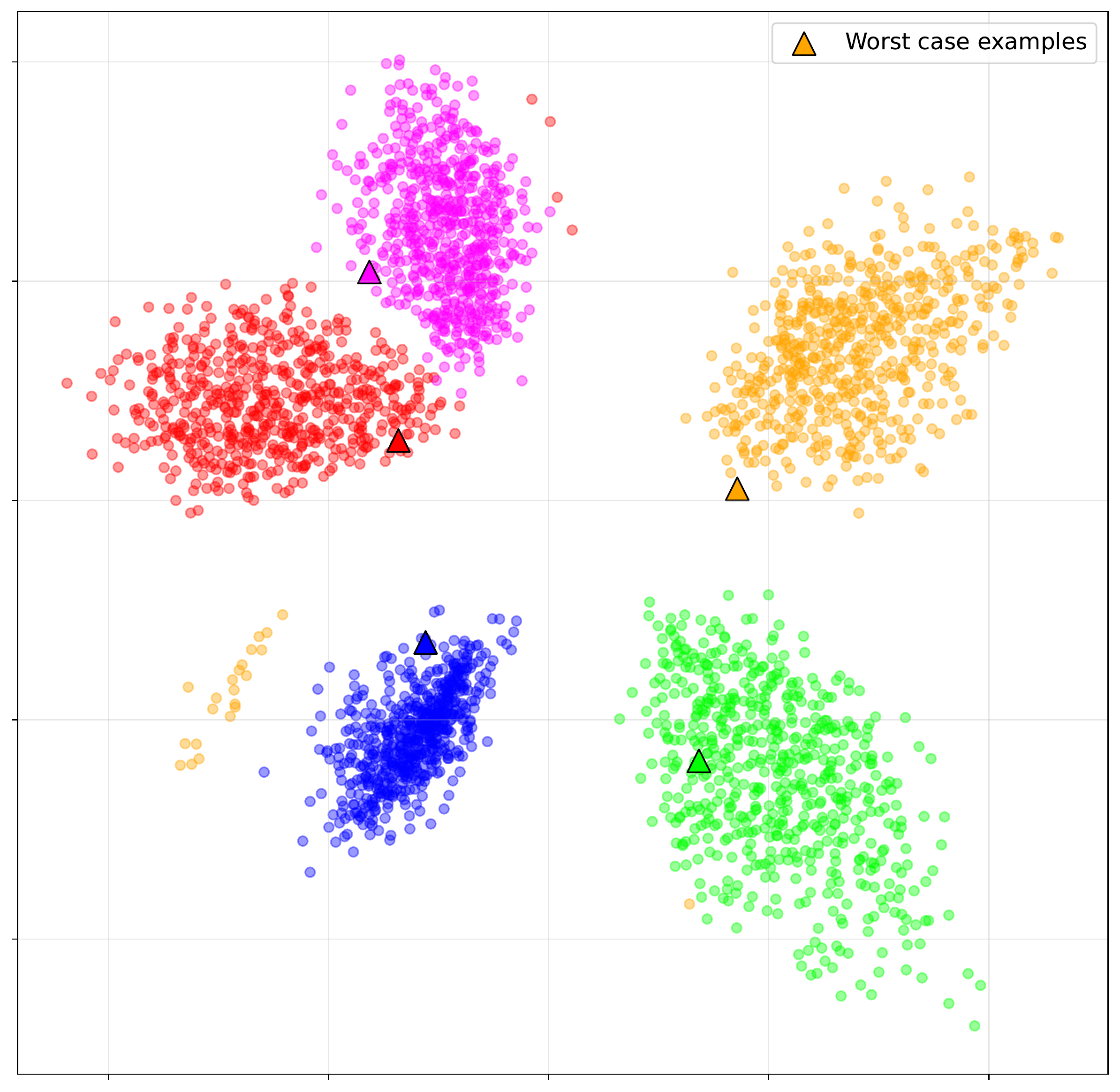}
                                            &   
                                \includegraphics[width=0.2\textwidth]{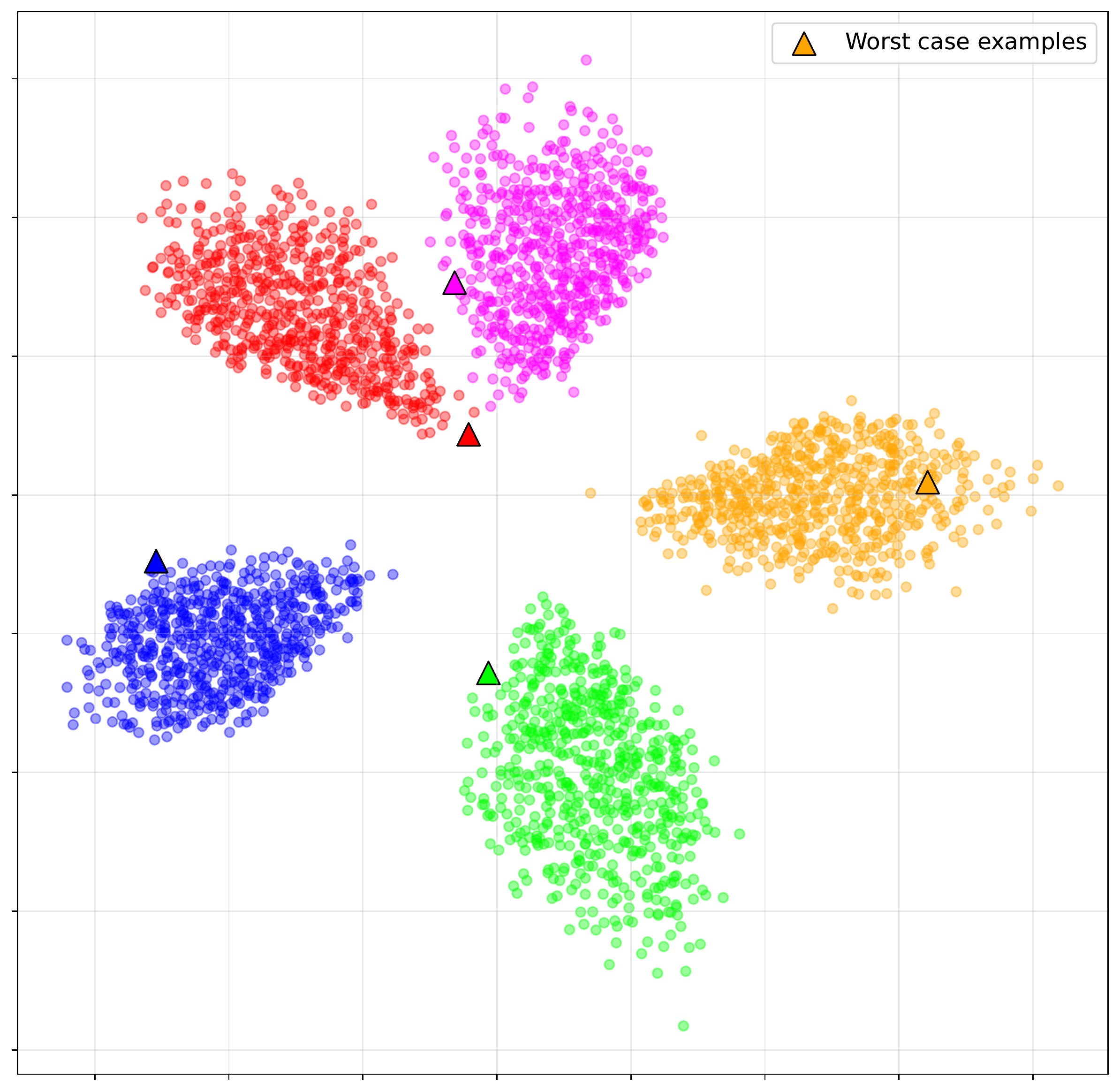}
                                \\
                    \cmidrule(r){2-5}
                                &   Test    &       
                                \includegraphics[width=0.2\textwidth]{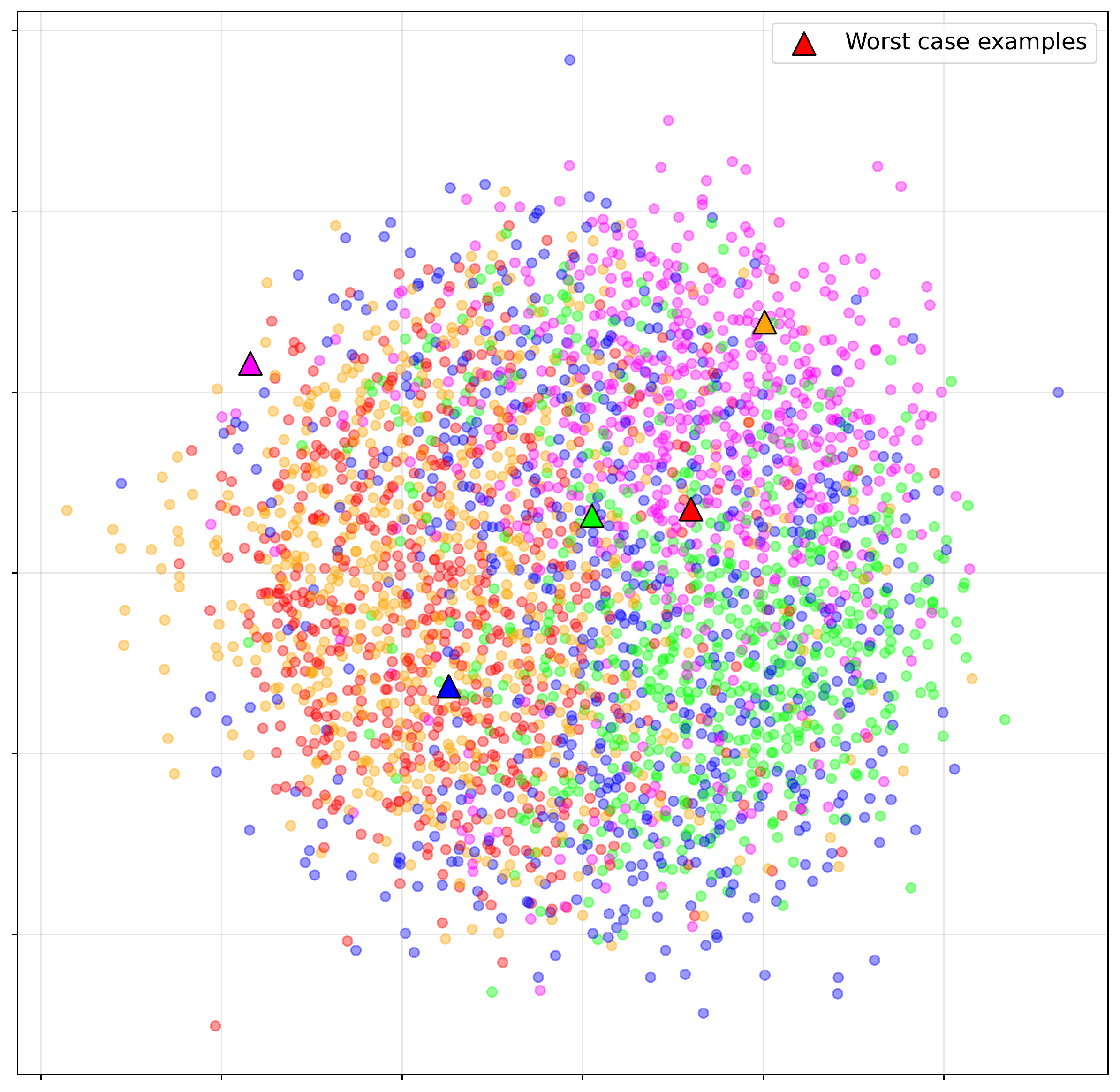}
                                            &       
                                \includegraphics[width=0.2\textwidth]{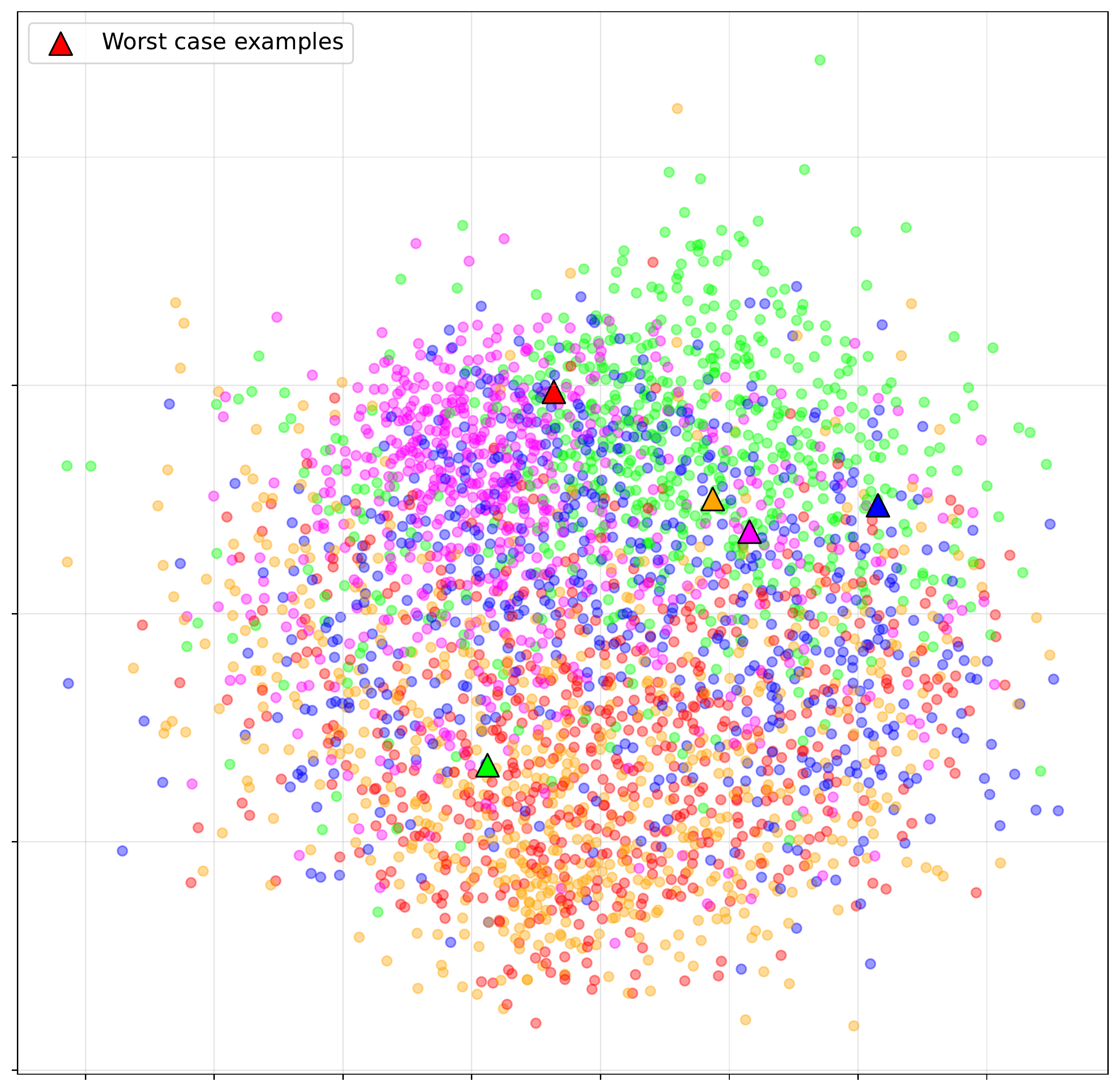}
                                            &           
                                \includegraphics[width=0.2\textwidth]{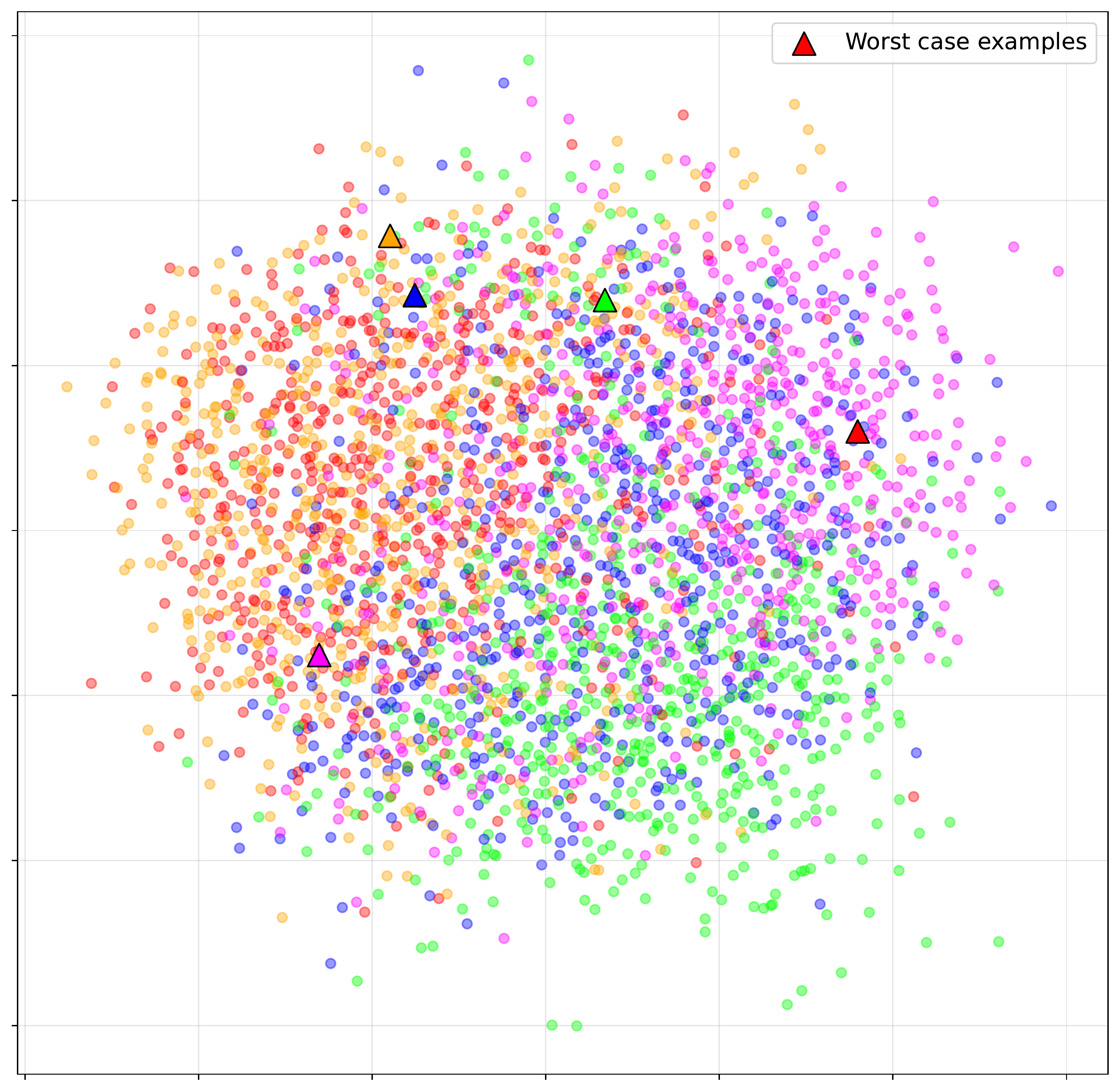}
                                \\
    \bottomrule
    \end{tabular}

\end{table}


\end{document}